\title{What is the Right Notion of Distance between Predict-then-Optimize Tasks?}
\author[1]{Paula Rodriguez-Diaz}
\author[1]{Lingkai Kong}
\author[2]{Kai Wang}
\author[1,3,4]{David Alvarez-Melis}
\author[1]{Milind Tambe}
\affil[1]{%
    Harvard University
}
\affil[2]{%
    Georgia Institute of Technology
}
\affil[3]{%
    Kempner Institute
}
\affil[4]{%
    Microsoft Research
}
\definecolor{darkblue}{rgb}{0.0, 0.0, 0.5}
\newcommand{\R}{\mathbb{R}}
\newcommand{\E}{\mathbb{E}}
\newcommand{\Et}{\mathbb{E}_{(x,y) \sim \Pd_T} \ }
\newcommand{\Etf}{\mathbb{E}_{(x,y,z) \sim \Pd_{T}^{f}} \ }
\newcommand{\Etstar}{\mathbb{E}_{(x,y,z) \sim \Pd_{S}^{*}} \ }
\newcommand{\D}{\mathcal{D}}
\newcommand{\X}{\mathcal{X}}
\newcommand{\Y}{\mathcal{Y}}
\newcommand{\Z}{\mathcal{W}}
\newcommand{\C}{\mathcal{Y}}
\newcommand{\Pd}{\mathcal{P}}
\newcommand*\dt{\mathop{}\!\mathrm{d}}
\newcommand{\dtproof}{\dt\Pi^*((x_s,y_s,z_s),(x_t,y_t,z_{t}^{f}))}
\newcommand{\dtproofshort}{\dt\Pi^*(\vb{w}_{s},\vb{w}_{t}^{f})}
\newcommand{\dq}{q}
\newcommand{\dqreg}{\dq_{\text{reg}}}
\newcommand{\ldq}{l_{g}}
\newcommand{\argmax}{\text{\normalfont argmax}}
\theoremstyle{plain}
\newtheorem{theorem}{Theorem}[section]
\newtheorem{proposition}[theorem]{Proposition}
\newtheorem{lemma}[theorem]{Lemma}
\theoremstyle{definition}
\newtheorem{definition}[theorem]{Definition}
\newtheorem{assumption}[theorem]{Assumption}
\theoremstyle{remark}
\begin{document}
\maketitle

\begin{abstract}
Comparing datasets is a fundamental task in machine learning, essential for various learning paradigms—from evaluating train and test datasets for model generalization to using dataset similarity for detecting data drift. While traditional notions of dataset distances offer principled measures of similarity, their utility has largely been assessed through prediction error minimization. However, in Predict-then-Optimize (PtO) frameworks, where predictions serve as inputs for downstream optimization tasks, model performance is measured through decision regret rather than prediction error. In this work, we propose OTD$^3$ (Optimal Transport Decision-aware Dataset Distance), a novel dataset distance that incorporates downstream decisions in addition to features and labels. We show that traditional feature-label distances lack informativeness in PtO settings, while OTD$^3$ more effectively captures adaptation success. We also derive a PtO-specific adaptation bound based on this distance. Empirically, we show that our proposed distance accurately predicts model transferability across three different PtO tasks from the literature. Code is available at \url{https://github.com/paularodr/OTD3}
\end{abstract}

\section{Introduction}

Comparing datasets is a fundamental task in machine learning and a crucial component of various downstream tasks. Understanding the \textit{similarity} (or \textit{dissimilarity}) of datasets can inform decisions in transfer learning \citep{tran_transferability_2019,ben-david_theory_2010}, multitask learning \citep{janati_wasserstein_2019,shui_principled_2019}, and data valuation \citep{just_lava_2023,jiang_opendataval_2023}, among other applications. For example, selecting a pre-training dataset that is similar to a data-poor target domain can lead to better fine-tuning performance. Notions of \textit{dataset distance} have emerged as a principled way of quantifying these similarities and differences \citep{mercioni_survey_2019, janati_wasserstein_2019, alvarez-melis_geometric_2020}. Such distances provide insights into the relation and correspondence between data distributions, help in evaluating model performance, and guide the selection of appropriate learning algorithms. 


The concept of \textit{dataset} can vary based on context and objectives. In classical statistics, it generally refers to feature vectors, focusing on the distribution and relationships within a feature space $\X$. 
Classic distributional distances offer formal measures of dataset similarity: the Total Variation distance \citep{verdu_total_2014} quantifies the maximum discrepancy between distributions; Wasserstein distance, or Earth Mover's Distance, measures the cost of transforming one distribution into another \citep{villani_optimal_2008}; and Integral Probability Metrics (IPM) measure how well a class of classifiers can distinguish samples from the two distributions \citep{muller_integral_1997}.

In supervised learning, datasets include both features from space $\X$ and labels from space $\Y$. The distance between two such datasets involves measuring both the feature and label differences. This can be challenging when the label space $\Y$ is not a metric space. Approaches such as those proposed by  \citet{courty_domain_2014}, \citet{alvarez-melis_structured_2018}, and \citet{alvarez-melis_geometric_2020} offer a principled method for computing dataset distances considering the joint feature-label distribution $\Pd(\X \times \Y)$. These methods ensure that both the features and labels are adequately accounted for in the distance measure, offering a more holistic comparison between datasets. \looseness=-1

However, the Predict-then-Optimize (PtO) framework introduces a unique challenge by using machine learning predictions as inputs for a downstream optimization problem, shifting the focus from minimizing prediction error to minimizing decision regret \citep{donti_task-based_2017, elmachtoub_smart_2022, wilder_melding_2019, mandi_decision-focused_2023}. This results in PtO tasks involving not just a feature-label dataset, but also a decision space $\Omega$ of optimization solutions, creating a feature-label-decision dataset with samples in $\X \times \Y \times \Omega$.
The decision space $\Omega$ may not be a metric space; for example, decisions related to the solution of a top-k problem do not necessarily form a metric space. Moreover, decisions might need to be evaluated under various criteria, such as minimizing travel distance or maximizing safety. Even if $\Omega$ were a metric space, it is uncertain whether its associated distance would be meaningful for assessing the adaptability of a PtO task across different domains. This complexity underscores the need for new distance measures that incorporate decisions to accurately capture the nature of PtO tasks. \looseness=-1

In this work we introduce OTD$^3$, a decision-aware dataset distance based on Optimal Transport (OT) techniques \citep{villani_optimal_2008} that incorporates features, labels, and decisions. OTD$^3$ is the first distance metric designed to account for downstream decisions, directly addressing the unique challenges posed by PtO tasks. We evaluate its utility as a learning-free criterion for assessing the transferability of models trained within a \textit{decision-focused learning} (DFL) framework under distribution shift. Within this framework, models are commonly trained using surrogate loss functions that aim to minimize decision regret \citep{wilder_melding_2019,mandi_decision-focused_2023}. In the context of domain adaptation in PtO, where we measure performance through decision regret, we derive a generalization bound that highlights the importance of considering features, labels, and decisions jointly. Our empirical analysis spans three PtO tasks from the literature—Linear Model Top-K, Warcraft Shortest Path, and Inventory Stock Problem—demonstrating that our decision-aware distance better predicts transfer performance compared to feature-label distances alone.

In summary, we make the following contributions:\vspace{-5pt}
\begin{itemize}[leftmargin=0.5cm, noitemsep, topsep=0pt]
\item We introduce a decision-aware dataset distance 
\item We derive an adaptation bound for PtO tasks in terms of this distance
\item We empirically validate our approach on three PtO tasks
\end{itemize}

\section{Related Work}

\paragraph{Dataset Distances via Optimal Transport}

Optimal Transport (OT)-based distances have gained traction as an effective method for comparing datasets. These methods characterize datasets as empirical probability distributions supported in finite samples, and require a cost function between pairs of samples to be provided as an input. Most OT-based dataset distance approaches define this cost function solely in terms of the features of the data, either directly or in a latent embedding space. For example, \citet{muzellec_generalizing_2018} proposed representing objects as elliptical distributions and scaling these computations, while \citet{frogner_learning_2019} extended this to discrete measures. \citet{delon_wasserstein-type_2020} introduced a Wasserstein-type distance for Gaussian mixture models. These approaches are useful mostly in unsupervised learning settings since they do not take into account labels or classes associated with data points. To address this limitation, a different line of work has proposed extensions of OT amenable to supervised or semi-supervised learning settings that explicitly incorporate label information in the cost function.  
\citet{courty_domain_2014} used group-norm penalties to guide OT towards class-coherent matches while \citet{alvarez-melis_structured_2018} employed submodular cost functions to integrate label information into the OT objective. For discrete labels, \citet{alvarez-melis_geometric_2020} proposed using a hierarchical OT approach to compute label-to-label distances as distances between the conditional distributions of features defined by the labels.

\paragraph{Predict-then-Optimize (PtO)}
The PtO framework has seen significant advancements in integrating machine learning with downstream optimization. The frameworks proposed by \citet{amos_optnet_2017, donti_task-based_2017, wilder_melding_2019} and \citet{elmachtoub_smart_2022} have been instrumental in this integration. Subsequent work has focused on differentiating through the parameters of optimization problems with various structures, including learning appropriate loss functions \citep{wang_automatically_2020, shah_decision-focused_2022, shah_leaving_2023, bansal_taskmet_2023} and handling nonlinear objectives \citep{qi_integrated_2023, elmachtoub_estimate-then-optimize_2025}. Recent efforts have addressed data-centric challenges within PtO, including including worst-case distribution shifts \citep{ren_decision-focused_2024}, robustness to adversarial label drift \citep{johnson-yu_characterizing_2023} and active learning for data acquisition \citep{liu_active_2025}. While these works propose task-specific learning algorithms, they all share a common underlying principle: dataset similarity. In distribution shifts and label drift, the key challenge lies in the (dis)similarity between training and test datasets, whereas in data acquisition, it concerns the (dis)similarity between the training dataset and the acquisition source.\looseness=-1

\section{Background}

\subsection{Optimal Transport}
OT theory provides an elegant and powerful mathematical framework for measuring the distance between probability distributions by characterizing similarity in terms of correspondence and transfer \citep{villani_optimal_2008, kantorovitch_translocation_1942}. In a nutshell, OT addresses the problem of transferring probability mass from one distribution to another while minimizing a cost function associated with the transportation.\looseness=-1

Formally, given two probability distributions $\alpha$ and $\beta$ defined on measurable spaces $\X$ and $\Y$, respectively, the OT problem seeks a transport plan  $\pi$ (defined as a coupling between $\alpha$ and $\beta$) that minimizes the total transportation cost. According to the Kantorovich formulation \citep{kantorovitch_translocation_1942}, for any coupling $\pi$, the transport cost between $\alpha$ and $\beta$ with respect to $\pi$ is defined as:
\begin{equation}
    d_{T}(\alpha,\beta;\pi) := \int_{\X \times \Y} c(x,y) \dt \pi(x,y),
\end{equation}

\noindent where $c(x,y)$ is the cost function representing the cost of transporting mass from point $x \in \X$ to point $y \in \Y$. The transport cost $d_{T}(\alpha,\beta;\pi)$ defines a distance, known as the \textit{transport distance} with respect to $\pi$, between $\alpha$ and $\beta$. The OT problem then minimizes the transport cost over all possible couplings between $\alpha$ and $\beta$, defining the \textit{optimal transport distance} as:
\begin{equation} 
    d_{OT}(\alpha,\beta;c) := \min\limits_{\pi \in \Pi(\alpha,\beta)}\int_{\X \times \mathcal{Y}} c(x,y) \dt \pi(x,y),
\end{equation}
where $\Pi(\alpha,\beta)$ denotes the set of all possible couplings (transport plans) that have $\alpha$ and $\beta$ as their marginals. This formulation finds the optimal way to transform one distribution into another by minimizing the total transportation cost. \looseness=-1

\subsection{Optimal Transport Dataset Distance}
\label{sec:ot_distance}
In supervised machine learning, datasets can be represented as empirical joint distributions over a feature-label space $\X \times \Y$. OT distances can be used to measure the similarity between these empirical distributions, thus providing a principled way to compare datasets. 
Given two datasets $\D$ and $\D'$ consisting of feature-label tuples $(x,y)$ and $(x',y')$, respectively, the challenge of defining a transport distance between $\D$ and $\D'$ lies in the challenge of defining an appropriate cost function between  $(x,y)$ and $(x',y')$ pairs. A straightforward way to define the feature-label pairwise cost is via the individual metrics in $\X$ and $\Y$ if available. If $d_{\X}$ and $d_{\mathcal{Y}}$ are metrics on $\X$ and $\Y$, respectively, the cost function can be defined as:
\begin{equation}
\label{eq:otdd_cost}
    c_{\X\Y}((x,y),(x',y')) = \big(d_{\X}(x,x')^p + d_{\Y}(y,y')^p\big)^{1/p}
\end{equation}
for $p \geq 1$. This point-wise cost function defines a valid metric on $\X\times \Y$. However, it is uncommon for $d_{\Y}$ to be readily available. To address this, \citet{courty_joint_2017} propose replacing $d_{\Y}(y,y')$ with a loss function $\mathcal{L}(y,y')$ that measures the discrepancy between $y$ and $y'$ while \citet{alvarez-melis_geometric_2020} suggest using a p-Wasserstein distance between the conditional distributions of features defined by $y$ and $y'$  as an alternative to $d_{\Y}(y,y')$. The latter is known as the \textit{Optimal Transport Dataset Distance} (OTDD). We also use this term when referring to the dataset distance $d_{OT}(\D,\D';c_{\X\Y})$.

\subsection{Predict-then-Optimize} \label{sec:back_pto}
The Predict-then-Optimize (PtO) framework involves two sequential steps: prediction and optimization. First, a predictive model $f$ is used to predict costs based on some features $x_1,\ldots,x_N \in \X$, represented as  $\vb*{\hat{y}} = [\hat{y}_1,\ldots,\hat{y}_N] = [f(x_1),\ldots,f(x_N)]$. Second, an optimization model uses these predicted costs $\vb*{\hat{y}}$ as the objective function costs:
\begin{equation}
    M(\vb*{\hat{y}}) := \argmax_w \ g(w;\vb*{\hat{y}}), \hspace{0.4cm} s.t. \hspace{0.2cm} w \in \Omega,
    \label{eq:optmodel}
\end{equation}

\noindent where $\Omega$ is the space of feasible solutions. We assume that $w^{*}_{M}:\R^d \to \Omega$ acts as an oracle for solving this optimization problem, such that $w^{*}_{M}(\vb*{\hat{y}})$ represents the optimal solution for $M(\vb*{\hat{y}})$. However, the solution $w^{*}_{M}(\vb*{\hat{y}})$ is optimal for $M(\vb*{\hat{y}})$ but might not be optimal for $M(\vb*{y})$, where $\vb*{y}$ represents the true costs. \looseness=-1

Given a hypothesis function $f: \X \to \Y$, we measure its performance on the optimization problem $M(\vb*{y})$ using the predicted cost vector $\vb*{\hat{y}} = [f(x_1), \ldots, f(x_N)]$ and the true cost vector $\vb*{y} = [y_1, \ldots, y_N]$. This is quantified as the \textit{decision quality} $q(\vb{\hat{y}}, \vb*{y}) := g(w^{*}_{M}(\vb{\hat{y}}); \vb*{y})$, reflecting the quality of decisions made using $w^{*}_{M}(\vb{\hat{y}})$ as a solution to $M(\vb*{y})$. The \textit{decision quality regret}, which evaluates the performance of $f$, is defined as:
\begin{equation}
\label{eq:dqreg}
    q_{\text{reg}}(\vb*{\hat{y}}, \vb*{y}) = |q(\vb*{y}, \vb*{y}) - q(\vb*{\hat{y}}, \vb*{y})|.
\end{equation}

The goal of decision-focused learning\cite{wilder_melding_2019} in a PtO task is to learn a predictive model $f_\theta$ that minimizes the decision quality regret, ensuring that the decisions derived from the predictions are as close to optimal as possible.

\section{Motivating example}

To illustrate the role of decisions in PtO task comparisons, we look at correspondence between task similarity and zero-shot transfer performance in a simple PtO task: the Linear Model Top-K setting from \citet{shah_decision-focused_2022}. This task consists of two stages: \looseness=-1

\begin{itemize}[leftmargin=0.3cm]
    \item[] \textit{Predict:} Given a resource’s feature $x_n \sim \Pd_\X$, where $\Pd_\mathcal{X} = \text{Unif}[-1, 1]$, a linear model predicts its utility $\hat{y}_n$, where the true utility follows $y_n = p(x_n)$, a cubic polynomial. Predictions for $N$ resources form $\hat{\bm{y}} = [\hat{y}_1, \ldots, \hat{y}_N]$.
    \item[] \textit{Optimize:} Select the top $K=1$ resource by solving $M(\hat{\bm{y}}) = \max_{\bm{z} \in [0,1]^N} \{\bm{z} \cdot \sigma_x(\hat{\bm{y}})\}$ such that $||\bm{z}||_0 = K$, where $\sigma_x$ orders $\hat{\bm{y}}$ in ascending order of $\bm{x} = [x_1, \ldots, x_N]$.
\end{itemize}

\noindent We analyze this task under target shifts—where label distributions change while feature distributions remain constant— parametrized by $\gamma$, where the shifted utility function is given by $p_\gamma(x) = 10(x^3 - \gamma x)$. We define two source domains, $A$ and $B$, with shifts $\gamma=0$ and $\gamma=1.2$, respectively. The target domain $C$ is characterized by $\gamma=0.65$.\footnote{We choose $\gamma=0.65$ for consistency with \citet{shah_decision-focused_2022}.}

\begin{figure}[h!]
    \centering
    \includegraphics[width=\columnwidth]{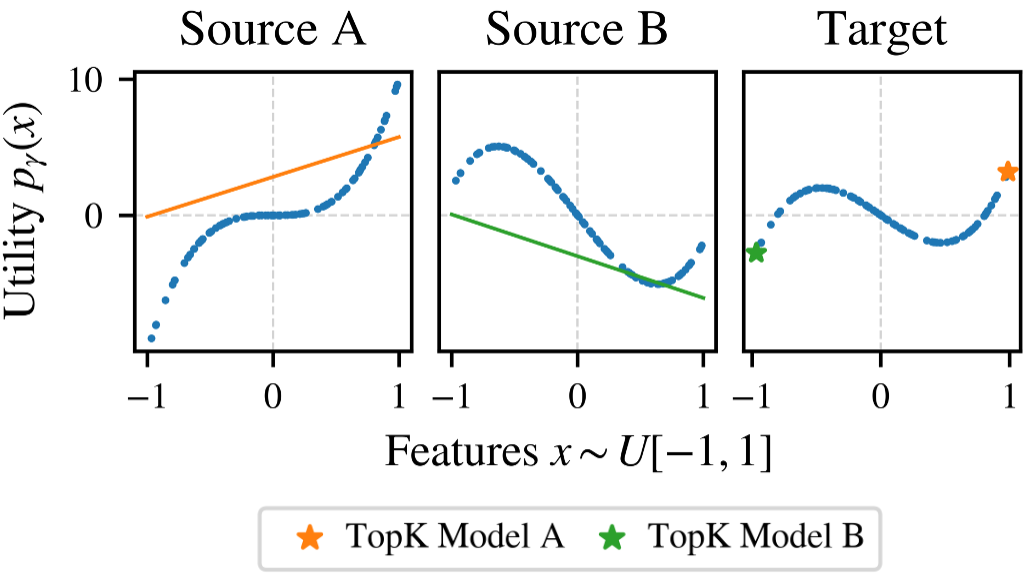}
    \caption{Linear Model Top-K instances under target shift}
    \label{fig:example}
\end{figure}

Assume we have only a few instance $\mathcal{D}_C = \{(\bm{x}_i,\bm{y}_i)\}_{i=1}^{n} \sim C$, but want to learn how to perform Top-K selection in domain $C$ when the true costs $\bm{y}$ are unknown. This limited data is insufficient for directly learning weights in this domain. However, we have access to datasets $\mathcal{D}_A$ and $\mathcal{D}_B$ drawn form source domains $A$ and $B$, respectively, allowing us to learn decision-focused weights $\theta_A$ and $\theta_B$.\footnote{We use \url{github.com/sanketkshah/LODLs}} The key question is: which weights, $\theta_A$ or $\theta_B$, should be used for the PtO task in $\mathcal{D}_C$?

A natural approach is to use the OTDD to identify the source dataset closest to the target. However, in this case, the computed distances $d_{OT}(\mathcal{D}_A, \mathcal{D}_C;c_{\X\Y})$ and $d_{OT}(\mathcal{D}_B, \mathcal{D}_C;c_{\X\Y})$ are equal, suggesting no preference between $\theta_A$ and $\theta_B$. Yet, in practice, $\theta_A$ yields zero regret on $\mathcal{D}_C$, while $\theta_B$ results in a regret close to 4, making $\theta_A$ the clear choice for the PtO task on $\mathcal{D}_C$ (see Appendix Fig.~\ref{app-fig:regression} for details). Figure \ref{fig:example} illustrates this discrepancy: the model with $\theta_A$ (orange) successfully selects the correct Top-K resource in $\mathcal{D}_C$, while the model with $\theta_B$ (green) fails to do so. Since regret differs significantly, dataset distances should reflect that $\mathcal{D}_C$ is closer to $\mathcal{D}_A$ than $\mathcal{D}_B$. We argue that feature-label distances alone are insufficient, and incorporating decision components is necessary for distances to accurately reflect similarities, and hence adaptability, in PtO tasks.\looseness=-1

\section{Decision-aware Dataset Distance}

A dataset for a PtO task with downstream optimization $M(\cdot)$ and oracle $w^{*}_{M}$ consists of feature-label-decision triplets $(x, y, z) \in \X \times \Y \times \Omega$, where the decision $z$ is the precomputed optimal solution to the optimization task parametrized by the true label $y$, i.e., $z=M(y)$. Our objective is to formalize a notion of similarity between PtO tasks by defining a distance $d(\D,\D')$ between datasets $\D=\{(x,y,M(y))\}_{(x,y)\sim \Pd}$ and $\D'=\{(x,y,M'(y))\}_{(x,y)\sim \Pd'}$ for any distributions $\Pd$ and $\Pd'$ over the joint feature-label space $\X \times \Y$ and optimization problems $M$ and $M'$ parametrized by $\Y$. 

OT provides a natural framework for comparing datasets by leveraging the geometry of the underlying space and establishing correspondences between distributions. It has been used as the foundation for OTDD, a dataset distance defined over features and labels \citep{alvarez-melis_geometric_2020}. We extend this idea to PtO tasks, where dataset distances must also account for differences in decisions arising from the downstream optimization process. Unlike standard settings where similarity is assessed based only on feature-label distributions, PtO tasks introduce an additional layer of complexity: decisions $z$ are solutions to an optimization problem dependent on $y$, and their quality directly impacts task performance.

In the following sections, we formalize our proposed decision-aware dataset distance by extending OTDD to incorporate decision quality. This formulation provides a principled way to compare PtO datasets, ensuring that the resulting distance reflects meaningful differences in feature-label-decision distributions while remaining sensitive to the structure of the underlying optimization problem.\looseness=-1


\subsection{Dataset Distance Formulation}
\label{sec:decision_distance}
To apply OT to datasets in PtO settings, we need a well-defined metric for the joint space of features, labels and decisions, $\Z := \X \times \Y \times \Omega$, to serve as the ground cost function in the OT problem. A natural approach is to construct distances in $\Z$ by combining metrics from the feature space $\X$, the label space $\Y$, and the decision space $\Omega$. In most well-studied PtO settings, $\X$ and $\Y$ are equipped with metrics $d_\X$ and $d_\Y$, respectively, providing a natural foundation for measuring feature-label distances.
However, defining an appropriate metric for the decision space $\Omega$ requires special consideration. 

While some decision spaces naturally admit standard metrics, others—such as those arising in resource allocation or scheduling—do not align with conventional distance measures. Even when a metric exists for $\Omega$, it may fail to capture decision quality regret, the ultimate objective in PtO tasks. For example, in a $p \times q$ grid, Euclidean or Manhattan distances can measure geometric differences between paths but fail to account for task-specific objectives, such as minimizing costs or maximizing safety.

To ensure the ground cost function for $\Z$ properly reflects both decision quality and feature-label relationships, we introduce the concept of \textit{decision quality disparity}. This extends traditional metrics by comparing decisions not just in terms of their spatial or structural differences but also in terms of their effectiveness under different labels. Specifically, decision quality disparity measures the extent to which two decisions $z, z' \in \Omega$ differ in performance when evaluated under labels $y$ and $y'$ respectively.

\begin{definition} For an optimization problem $M(\cdot)$ with objective function $g$, the \textit{decision quality disparity} function $\ldq(\ \cdot \ ; y, y'): \Omega^2 \to \R$ measures the difference in decision quality between two decisions $z,z'\in \Omega$ given the labels $y,y' \in \Y$. It is defined as:
\begin{align}
    &\ldq(z,z';y,y') := \abs{g(z;y) - g(z';y')}.
\end{align}
\label{def:dqd}
\end{definition}

Note that decision quality regret (Section~\ref{sec:back_pto}) is a special case of decision quality disparity, where $\dqreg(\hat{y}, y)=\ldq(w^*(\hat{y}), w^*(y); y, y)$ for an optimization oracle $w^*$. We use decision quality disparity to define a point-wise distance in the joint feature-label-decision space $\Z$. The resulting ground cost function $c_{PtO}^{\bm{\alpha}}$ for the OT problem is given by:
\begin{align}
\label{eq:cpto}
    c_{PtO}^{\bm{\alpha}}((x,y,z),(x',y',z')) := & \ \ \alpha_X \cdot d_{\X}(x,x') \notag \\&+ \alpha_Y \cdot d_{\Y}(y,y') \\&+ \alpha_W \cdot \ldq(z,z';y',y'), \notag
\end{align}
\noindent for $\bm{\alpha} = [\alpha_X,\alpha_Y,\alpha_W] \in \R^{3}_{\geq 0}$ such that $||\bm{\alpha}||=1$. Here, $d_{\X}$ and $d_{\Y}$ represent metrics for the feature space $\X$ and label space $\Y$, while $\ldq$ captures differences in decisions. The additive combination in $c_{PtO}$ ensures simplicity and validity as a metric, with each component independently reflecting a distinct aspect of similarity. This design avoids complex, application-specific interactions and prioritizes interpretability. In the appendix, we show that $c_{PtO}$ is a proper distance in $\Z$. Notably, we set $\alpha_Y > 0$ to analyze scenarios where both labels and decisions contribute to PtO similarity versus cases where labels may be redundant.

We extend this point-wise distance to a distance between datasets $\D$ and $\D'$ by solving the OT cost with ground cost $c_{PtO}^{\bm{\alpha}}$, denoted as $d_{OT}(\D,\D';c_{PtO}^{\bm{\alpha}})$. We refer to this distance as the \textit{Optimal Transport Decision-Aware Dataset Distance ($OTD^3$)}.

\begin{proposition}
\label{prop:metric}
For any $\bm{\alpha}=(\alpha_X,\alpha_Y, \alpha_W)$ with $\alpha_X,\alpha_Y, \alpha_W >0$, $d_{OT}(\D, \D'; c^{\bm{\alpha}}_{PtO})$ is a valid metric on $\mathcal{P}(\X \times \Y \times \Omega)$, the space of measures over joint distributions of features $\X$, labels $\Y$, and decisions $\Omega$. If $\alpha_Y=0$, $d_{OT}(\D, \D'; c^{\bm{\alpha}}_{PtO})$ is at least a pseudometric.
\end{proposition}

This decision-aware dataset distance compares decisions $z$ and $z'$ by evaluating their decision quality disparity in $\mathbb{R}$ relative to a pair of fixed labels, rather than directly comparing them in the decision space $\Omega$. Intuitively, comparing decisions based on their quality, i.e., comparing $g(z; y)$ with $g(z'; y)$, rather than comparing $z$ and $z'$ directly using some metric in $\Omega$, if available, is reasonable because similar decisions might yield significantly different outcomes in the objective function. In Section~\ref{sec:bound} we show that comparing decision in this way offers a principled means of assessing adaptation success of PtO tasks across distributions in the feature-label-decision space.

\paragraph{Component Weights are Task-Specific Hyperparameters.} The weights $\bm{\alpha}$ on the ground cost component (Eq.~\ref{eq:cpto}), are pivotal in defining the OTD$^3$, offering a flexible framework to account for the varying importance of features, labels, and decisions in PtO tasks. Unlike previous OT-based dataset distances that did not differentiate between the weights of feature and label components in the ground cost function—often because both were measured in the same space \cite{alvarez-melis_geometric_2020} or were weighted equally \cite{courty_joint_2017}— our method allows for distinct weights, enabling a more nuanced evaluation of dataset similarity tailored to each specific task. This flexibility ensures that the distance metric reflects the relative significance of each dataset component according to its impact on the PtO task, which can vary widely in practice depending on the application.

The impact of each component—features, labels, and decisions—on the overall distance can vary across PtO tasks. In particular, the decision and label components may sometimes capture overlapping structure. When such alignment occurs, the added value of decision information may be diminished, while in other cases, decisions encode complementary information. This mirrors the intuition from multivariate modeling where high correlation between variables can reduce the sensitivity to their individual weights. While our current formulation provides flexibility via weighting, understanding when and how much each component contributes remains an open and important question. We return to this empirically in Section~\ref{sec:pred_transfer}.

\subsection{Decision Regret Adaptation Bound} \label{sec:bound}
Given source and target distributions $\Pd_S$ and $\Pd_T$ over $\X \times \Y$, we study domain adaptation from $\Pd_S$ to $\Pd_T$ in a PtO framework where decisions are generated by a downstream optimization problem $M(\cdot)$ parametrized in $\Y$. Let $f:\X \to \C$ be a labeling function. We define the expected cost of $f$ under a distribution $\Pd$ over $\X \times \C$ with respect to any cost function $l:\C \times \C \to \R$ as 
\begin{equation}
 err(f; l, \Pd) := \E_{(x,y)\sim \Pd} \ l(f(x),y).
\end{equation}
In PtO tasks, the performance of $f$ over a distribution $\Pd$ is quantified as the \textit{expected decision quality regret}, given by $err(f; \dqreg, \Pd)$. Our goal is to bound this error on the target distribution, $\text{err}(f; \dqreg, \Pd_T)$, in terms of the distance between $\Pd_T$ and the source distribution $\Pd_S$. We use $OTD^3$ to achieve this.

Prior work by \cite{courty_joint_2017} provided adaptation bounds for an expected target error $\text{err}(f; l, \Pd_T)$ with a cost function $l$ that is bounded, symmetric, $k$-Lipschitz, and satisfies the triangle inequality. However, decision quality regret $\dqreg$, the key cost function in PtO tasks, is inherently non-symmetric, making these bounds inapplicable to $\text{err}(f; \dqreg, \Pd)$. To address this, we introduce the notion of  \textit{decision quality disparity} $l_q$ (Definition~\ref{def:dqd}) to bound decision quality regret $\dqreg$. Additionally, we assume that the decision quality function $q$ has a bounded rate of change with respect to both the predicted and true cost vectors (Assumption \ref{assump:1}). Under these conditions, we derive an adaptation bound for $err(f; \dqreg, \Pd_T)$ using the OTD$^3$ (Theorem \ref{theo1}). As demonstrated in lemmas \ref{lemma1} and \ref{lemma2} in the Appendix, Assumption \ref{assump:1} holds for common PtO task structures.

\begin{assumption}
\label{assump:1}
    The decision quality function $q$ is $k_1,k_2$-Lipschitz. This means that for any $y,y^{*},z,z^{*} \in \C$ the following inequality holds:
    $$|\dq(y,y^{*}) - \dq(z,z^{*})| \leq k_1\norm{y-z} + k_2\norm{y^{*}-z^{*}}$$.
\end{assumption}

\begin{definition}[\citet{courty_joint_2017}]
 Let $\mu_1$ and $\mu_2$ be distributions over some metric space $\X$ with metric $d_{\X}$. Let $\Pi(\mu_1,\mu_2)$ be a joint distribution over $\mu_1 \times \mu_2$. Let $\phi: \R \to [0,1]$. A labeling function $f: \X \to \R$ is $\phi$-\textit{Lipschitz transferable with respect to $\Pi$} if for all $\lambda >0$: $$\Pr_{(x_1,x_2)\sim \Pi(\mu_1,\mu_2)}[|f(x_1)-f(x_2)| > \lambda d_{\X}(x_1,x_2)] \leq \phi(\lambda).$$
\end{definition}

\begin{theorem}
\label{theo1} Suppose Assumption \ref{assump:1} holds for an optimization problem $M(\cdot)$ with optimization oracle $w^*$. Let $f:\X \to \C$ be a labeling function, and define the distributions $\Pd_{T}^{f} := (x,y,w^*(f(x)))_{(x,y) \sim \Pd_T}$ and $\Pd_{S}^{*} := (x,y,w^*(y))_{(x,y) \sim \Pd_S}$ over the joint feature-label-decision space $\Z$. Let $\Pi^*$ denote the optimal coupling for the OT problem with ground cost $c_{PtO}^{\bm{\alpha}}$ between $\Pd_{T}^{f}$ and $\Pd_{S}^{*}$. Let $\Tilde{f}$ be a labeling function that is $\phi$-Lipschitz transferable with respect to $\Pi^*$. 
Assume that the feature space $\X$ is bounded by $K$ and that $\Tilde{f}$ is $l$-Lipschitz, satisfying $|\Tilde{f}(x_1)-\Tilde{f}(x_2)| \leq 2lK = L$. 

For any $\lambda > 0$ and $\alpha_W \in (0,1)$ such that $(\lambda k_1 + k_2 + 1)\alpha_W = 1$, with $\alpha_X = \lambda k_1 \alpha_W$ and $\alpha_Y = k_2\alpha_W$, the following bound holds with probability at least $1 - \delta$:
\begin{align*}
    err(f; \dqreg, \Pd_T) \leq &\ err(\Tilde{f}; \dqreg, \Pd_S) + err(\Tilde{f}; \dqreg, \Pd_T)  \\
    &+ k_1L\phi(\lambda) + \frac{1}{\alpha_W}d_{OT}(\Pd_{T}^{f}, \Pd_{S}^{*} ; c_{PtO}^{\bm{\alpha}}).
\end{align*}
\end{theorem}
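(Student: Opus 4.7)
The plan is to get a pointwise bound on $\dqreg(f(x_t),y_t)$ and then integrate against the optimal coupling $\Pi^*$ between $\Pd_T^f$ and $\Pd_S^*$. Since $\dqreg(f(x_t),y_t) = |\dq(y_t,y_t) - \dq(f(x_t),y_t)|$, I would insert the chain of intermediate quantities $\dq(\Tilde{f}(x_t),y_t) \to \dq(\Tilde{f}(x_s),y_s) \to \dq(y_s,y_s)$ and apply the triangle inequality to split $\dqreg(f(x_t),y_t)$ into four pieces: the target regret $\dqreg(\Tilde{f}(x_t),y_t)$, the source regret $\dqreg(\Tilde{f}(x_s),y_s)$, a "predictor-bridging" term $|\dq(\Tilde{f}(x_t),y_t)-\dq(\Tilde{f}(x_s),y_s)|$, and a "decision-bridging" term $|\dq(y_s,y_s)-\dq(f(x_t),y_t)|$.

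Next, I would apply Assumption~\ref{assump:1} to the two bridging terms. The predictor-bridging term is directly bounded by $k_1\|\Tilde{f}(x_t)-\Tilde{f}(x_s)\| + k_2\,d_{\Y}(y_t,y_s)$. For the decision-bridging term, using $z_s=w^*(y_s)$ and $z_t^f=w^*(f(x_t))$, I would write $|\dq(y_s,y_s)-\dq(f(x_t),y_t)| = |g(z_s;y_s)-g(z_t^f;y_t)|$ and insert $g(z_s;y_t)$ to split it as $|g(z_s;y_s)-g(z_s;y_t)| + |g(z_s;y_t)-g(z_t^f;y_t)|$; the first summand is controlled by $k_2\,d_{\Y}(y_s,y_t)$ via Assumption~\ref{assump:1}, and the second summand is exactly $\ldq(z_s,z_t^f;y_t,y_t)$, i.e.\ the decision component of $c_{PtO}^{\bm{\alpha}}$. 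Integrating the resulting pointwise inequality against $\Pi^*$ converts the two regret terms into $err(\Tilde{f};\dqreg,\Pd_T)+err(\Tilde{f};\dqreg,\Pd_S)$ and the $d_{\Y}$ and $\ldq$ terms into their $\Pi^*$-expectations, which are, up to the $\alpha_Y$ and $\alpha_W$ weights, the corresponding pieces of $d_{OT}(\Pd_T^f,\Pd_S^*;c_{PtO}^{\bm{\alpha}})$.

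It remains to handle $k_1\,\E_{\Pi^*}\|\Tilde{f}(x_t)-\Tilde{f}(x_s)\|$, and this is where I expect the main difficulty. Using the probabilistic $\phi$-Lipschitz transferability of $\Tilde{f}$ with respect to $\Pi^*$ (as defined by \citet{courty_joint_2017}), I would split the expectation on the events $\{d_{\X}(x_s,x_t)\le\lambda\}$ and $\{d_{\X}(x_s,x_t)>\lambda\}$: on the "good" event (which holds with probability at least $1-\delta$) the transferability hypothesis gives $\|\Tilde{f}(x_s)-\Tilde{f}(x_t)\|\le L\phi(\lambda)$, and on the "bad" event the boundedness $|\Tilde{f}(x_1)-\Tilde{f}(x_2)|\le L$ together with a Markov-type control by $d_{\X}(x_s,x_t)/\lambda$ yields a residual term of the form $\lambda k_1\,\E_{\Pi^*}[d_{\X}(x_s,x_t)]$. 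Taken together, $k_1 \E_{\Pi^*}\|\Tilde{f}(x_t)-\Tilde{f}(x_s)\|$ is bounded by $k_1 L \phi(\lambda) + \lambda k_1 \E_{\Pi^*}[d_{\X}(x_s,x_t)]$ with probability at least $1-\delta$.

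Finally, I would match coefficients. Choosing $\alpha_X=\lambda k_1\alpha_W$ and $\alpha_Y=k_2\alpha_W$ makes the aggregated feature, label, and decision expectations in the bound equal to $\frac{1}{\alpha_W}\bigl(\alpha_X\E[d_{\X}]+\alpha_Y\E[d_{\Y}]+\alpha_W\E[\ldq]\bigr)=\frac{1}{\alpha_W}d_{OT}(\Pd_T^f,\Pd_S^*;c_{PtO}^{\bm{\alpha}})$ by optimality of $\Pi^*$, while the normalization $(\lambda k_1 + k_2 + 1)\alpha_W=1$ places $\bm{\alpha}$ on the simplex required by the definition of $c_{PtO}^{\bm{\alpha}}$. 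Combining this with the pointwise decomposition and the transferability step produces the stated bound with probability at least $1-\delta$. The most delicate step is the calibration in the third paragraph: one needs the probabilistic transferability bound to split cleanly into a piece that is independent of the OT cost (yielding the $k_1 L\phi(\lambda)$ term) and a piece whose coefficient matches the $\alpha_X$ prescribed in the theorem so that everything collapses into a single multiple of $d_{OT}$.
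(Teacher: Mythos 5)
Your proposal follows essentially the same route as the paper: a triangle-inequality decomposition of the target regret through $\Tilde{f}$ evaluated on both domains, Lipschitz control of the two bridging terms via Assumption~\ref{assump:1}, the probabilistic transfer-Lipschitz split for the $\norm{\Tilde{f}(x_t)-\Tilde{f}(x_s)}$ term, and a final calibration of $\bm{\alpha}$ that collapses the feature, label, and decision expectations under $\Pi^*$ into $\frac{1}{\alpha_W}d_{OT}(\Pd_T^f,\Pd_S^*;c_{PtO}^{\bm{\alpha}})$. The only structural difference is that you carry out the whole decomposition pointwise and integrate against $\Pi^*$ once at the end, whereas the paper first peels off $err(\Tilde{f};\dqreg,\Pd_T)$ and adds/subtracts $err(\Tilde{f};\dqreg,\Pd_S)$ at the level of expectations and only then couples the residual difference of expectations; these are equivalent, and yours is arguably cleaner.

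Two bookkeeping issues, however. First, your decision-bridging step inserts $g(z_s;y_t)$ and therefore produces a second $k_2\,d_{\Y}(y_s,y_t)$ (on top of the one already contributed by the predictor-bridging term) together with a disparity $\ldq(z_s,z_t^f;y_t,y_t)$ evaluated under the \emph{target} label. But the ground cost applied to a pair drawn from $(\Pd_T^f,\Pd_S^*)$ carries only one $\alpha_Y\,d_\Y$ term and the disparity $\ldq(z_t^f,z_s;y_s,y_s)$ under the \emph{source} label, so with your decomposition the calibration would have to be $\alpha_Y=2k_2\alpha_W$ rather than the stated $\alpha_Y=k_2\alpha_W$. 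The paper avoids the duplicate $k_2$ by bounding the second bridging difference with a single reverse triangle inequality on $\ldq$ — changing the pair $(z_t^f,y_t)$ to $(z_s,y_s)$ in one step so that the entire change collapses into one $\ldq$ term — instead of passing through the intermediate $g(z_s;y_t)$. (The paper's own step is itself loose about which label the resulting $\ldq$ ends up evaluated under, so the label-orientation ambiguity is shared; but as written your constants do not match the theorem's.) Second, your narration of the transfer-Lipschitz step swaps the two events: the $k_1L\phi(\lambda)$ term comes from the \emph{bad} event, which has probability at most $\phi(\lambda)$ and on which one uses the uniform bound $L$, while the $\lambda k_1\,\E_{\Pi^*}[d_\X(x_t,x_s)]$ term comes from the good event on which $|\Tilde{f}(x_t)-\Tilde{f}(x_s)|\le\lambda d_\X(x_t,x_s)$; the combined inequality you state at the end of that paragraph is nevertheless the correct one and is exactly what the paper uses.
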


The proof of Theorem~\ref{theo1} is provided in the supplementary material. The first two terms in the bound represent the joint decision quality regret minimizer across the source and target distributions. This indicates that successful domain adaptation in the PtO framework requires predictions that achieve low regret in both domains simultaneously. This result aligns with the findings of \cite{courty_joint_2017,mansour_domain_2009} and \cite{ben-david_theory_2010} in the context of domain adaptation for supervised learning. The third term $k_1L\phi(\lambda)$ captures the extend to which Lipschitz continuity between the source and target distributions may fail.

The final term measures the discrepancy between the source domain $\Pd^{*}_{S}$ and the predicted target domain $\Pd^{f}_{T}$ using the optimal transport distance between their joint distributions of features, labels, and decisions. The bound relies on two key parameters: $\lambda$ and $\alpha_W$. $\lambda$ controls the weight of the Lipschitz term and is valid for any $\lambda > 0$, while $\alpha_W$ determines the weight assigned to decisions in the convex combination $c_{PtO}^{\bm{\alpha}}$. Note that the bound holds for any combination of weights $\alpha_X, \alpha_Y, \alpha_W$, as $\lambda$ can always be adjusted to ensure a valid convex combination. 

Our approach recognizes the necessity of incorporating decisions into dataset distances when used for domain adaptation purposes for PtO tasks. Our OT-based dataset distance, defined by the ground cost function $c_{PtO}^{\bm{\alpha}}$, jointly accounts for differences in all key components—features, labels, and decisions—providing a comprehensive measure that is meaningful for adaptability of PtO tasks.


\section{Experimental settings} \label{sec:exp_settings}

We conduct experiments on three PtO settings with diverse structures and sensitivity to distribution shifts, making them well-suited for analyzing dataset distances in domain adaptation. Additional details are provided in the Appendix.

\paragraph{Linear Model Top-K \citep{shah_decision-focused_2022}.} This setting involves training a linear model to map features \( x_n \sim U[-1,1] \) to true utilities based on a cubic polynomial \( p(x_n) = 10(x_{n}^3 - 0.65x_n) \). The downstream task is selecting the \( K \) elements with highest utility. We introduce synthetic distribution shifts by modifying the original feature-label distribution \( \Pd = (\text{Id}, p)_*U[-1,1] \). Specifically, for various values of \( \gamma \in [0,1.3] \), we define the feature-label distributions \( \Pd_\gamma = (\text{Id}, p_\gamma)_*U[-1,1] \) where \( p_{\gamma}(x_n) = 10(x_{n}^3 - \gamma x_n) \), using \( \Pd_{0.65} \) as the target distribution.

\paragraph{Warcraft Shortest Path \citep{vlastelica_differentiation_2020}.} This task involves finding the minimum-cost path on RGB grid maps from the Warcraft II tileset dataset, where each pixel has an unknown travel cost. The goal is to predict these costs and then determine the optimal path from the top-left to the bottom-right pixel. The target distribution $\mathcal{P}$ is defined over $\mathbb{R}^{d \times d \times 3} \times \mathbb{R}^{p \times p}$, with $d = 96$ and $p = 12$. To simulate distribution shifts, we generate synthetic distributions $\Pd_\gamma$ by uniformly sampling pixel class costs from the same range as $\mathcal{P}$. \looseness=-1

\paragraph{Inventory Stock Problem \citep{donti_task-based_2017}.} This task involves determining the order quantity \(z\) to minimize costs given a stochastic demand \(y\), influenced by features \(x\). The cost function $f_{\rm stock}$ includes linear and quadratic costs for both ordering and deviations (over-orders  and under-orders) from demand.
We generate problem instances by randomly sampling \(x \in \mathbb{R}^n\) and then generating \(p(y|x; \theta)\) according to \(p(y|x; \theta) \propto \exp((\theta^T x)^2)\). Distribution shifts are introduced in features \(x\) and labels \(y\): \(x\) is sampled from a Gaussian distribution with a mean sampled from \(U[-0.5, 0.5]\), and \(\theta\) is also sampled from a Gaussian distribution.\looseness=-1

\begin{figure*}
    \centering
    \begin{subfigure}[b]{0.3\textwidth}
        \centering
        \includegraphics[width=0.9\textwidth]{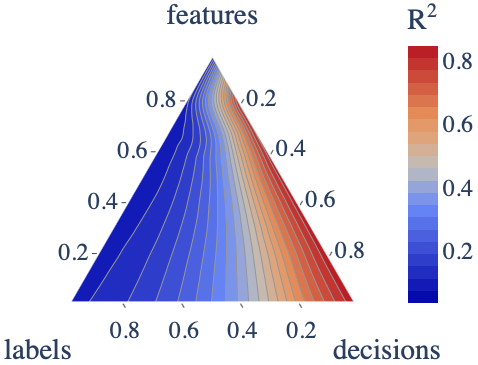} 
        \caption{Linear Model Top-K}
        \label{fig:triplots_topk}
    \end{subfigure}
    \hfill
    \begin{subfigure}[b]{0.3\textwidth}
        \centering
        \includegraphics[width=0.9\textwidth]{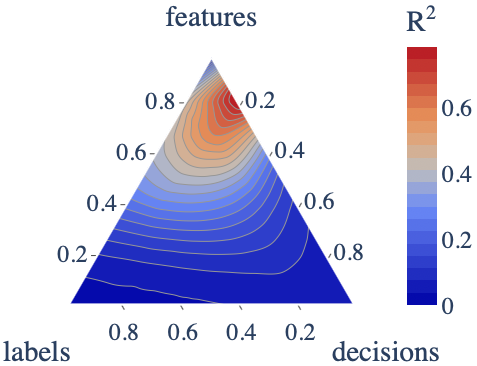}
        \caption{Warcraft Shortest Path}
        \label{fig:triplots_warcraft}
    \end{subfigure}
    \hfill
    \begin{subfigure}[b]{0.3\textwidth}
        \centering
        \includegraphics[width=0.9\textwidth]{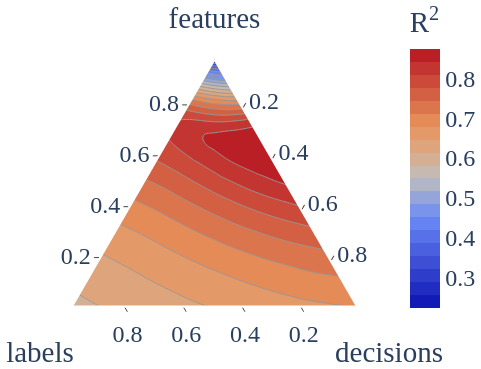}
        \caption{Inventory Stock}
        \label{fig:triplots_inventory}
    \end{subfigure}
    \caption{\textbf{Component weighting and transferability prediction}. The color scale represents the R-square value from a linear regression of OTD$^3$—with all possible weight combinations for features, labels, and decisions—against regret transferability. The left border of the triplot shows R-square values when using OTD$^3$ with $\alpha_W=0$, equivalent to OTDD.}
    \label{fig:triplots}
\end{figure*}

\section{Experiments} 
\label{sec:Experiments}

\subsection{Selecting Source Datasets for Transfer Learning} \label{sec:pred_transfer}

Dataset distances for feature-label datasets, such as OTDD, have shown to be predictive of classification error/accuracy transfer—i.e., the error/accuracy on a target dataset $\D_{T}^{test}$ for a model adapted from a source $\D_{S}$ to a target $\D_{T}$. \citet{alvarez-melis_geometric_2020} demonstrated that OTDD effectively predicts transferability and used this measure to select the best source dataset in a transfer learning task. We extend this source dataset selection experiment to the PtO setting, evaluating how well the OTD$^3$ predicts \textit{regret transferability} between PtO tasks.\looseness=-1

We analyze the correlation between the distance from a source dataset $\D_S$ to a target dataset $\D_{T}$ and the regret incurred on unseen target data $\D_{T}^{test}$ when adapting a model from $\D_S$ to $\D_{T}$—\textit{i.e.} pretraining on $\D_S$ and fine-tuning on $\D_{T}$. We compare OTD$^3(\D_S,\D_{T})$ against regret transferability $\mathcal{T}$, which quantifies the relative reduction in regret when transferring from $\D_S$ to $\D_{T}$:
\begin{equation*}
\mathcal{T}(S \to T) = 100 \times \frac{\text{reg}(\D_{T}) - \text{reg}(\D_{S} \to \D_{T})}{\text{reg}(\D_{T})},
\end{equation*}
\noindent where $\text{reg}(\D_{T})$ represents the mean regret when training directly on $\D_{T}$, and $\text{reg}(\D_{S} \to \D_{T})$ represents the mean regret when adapting from $\D_S$ to $\D_{T}$. Each regret term is computed on $\D_{T}^{test}$, ensuring that transferability is evaluated based on the model's performance on unseen target data. \looseness=-1

For every experimental setting we generate $K$ source datasets $\D_{S_1}, \ldots, \D_{S_K}$, each sampled from a different distribution $\Pd_{S_i}$, along with training and test datasets $\D_{T}$ and $\D_{T}^{test}$ drawn from a target distribution $\Pd_{T}$. For each source-target pair $(\D_{S_i}, \D_T)$, we compute the regret transferability $\mathcal{T}(S_i \to T)$ by training models using standard DFL approaches (Appendix \ref{app:experiments}) and analyze its relationship with the OTD$^3$. \looseness=-1

\paragraph{Predicting transferability.} Figure~\ref{fig:triplots} shows the correlation strength ($R^2$ from linear regression) between regret transfer and dataset distance for different weighting combinations $\bm{\alpha}$. In the Linear Model Top-K and Warcraft settings, incorporating the decision component ($\alpha_W > 0$) significantly enhances the correlation between dataset distance and regret transfer, even when the label component is excluded ($\alpha_Y = 0$). Conversely, omitting the decision component ($\alpha_W = 0$, left side of the triplot) weakens this correlation. This trend is further emphasized when comparing the highest achievable correlation. In Warcraft, the OTD$^3$ with maximizing weights is far more predictive of regret transfer than the OTDD (or the OTD$^3$ with $\alpha_W=0$) under its best weighting (Figure~\ref{fig:regression}). The best-performing weights in this case were $\alpha_X=0.8$ and $\alpha_Y=0.2$ for the OTDD, and $\alpha_X=0.75$, $\alpha_Y=0$, and $\alpha_W=0.25$. We denote these optimized versions as the OTDD$*$ and the OTD3$*$.

\begin{figure}
    \centering
    \includegraphics[trim=0 10 0 0,clip, width=\columnwidth]{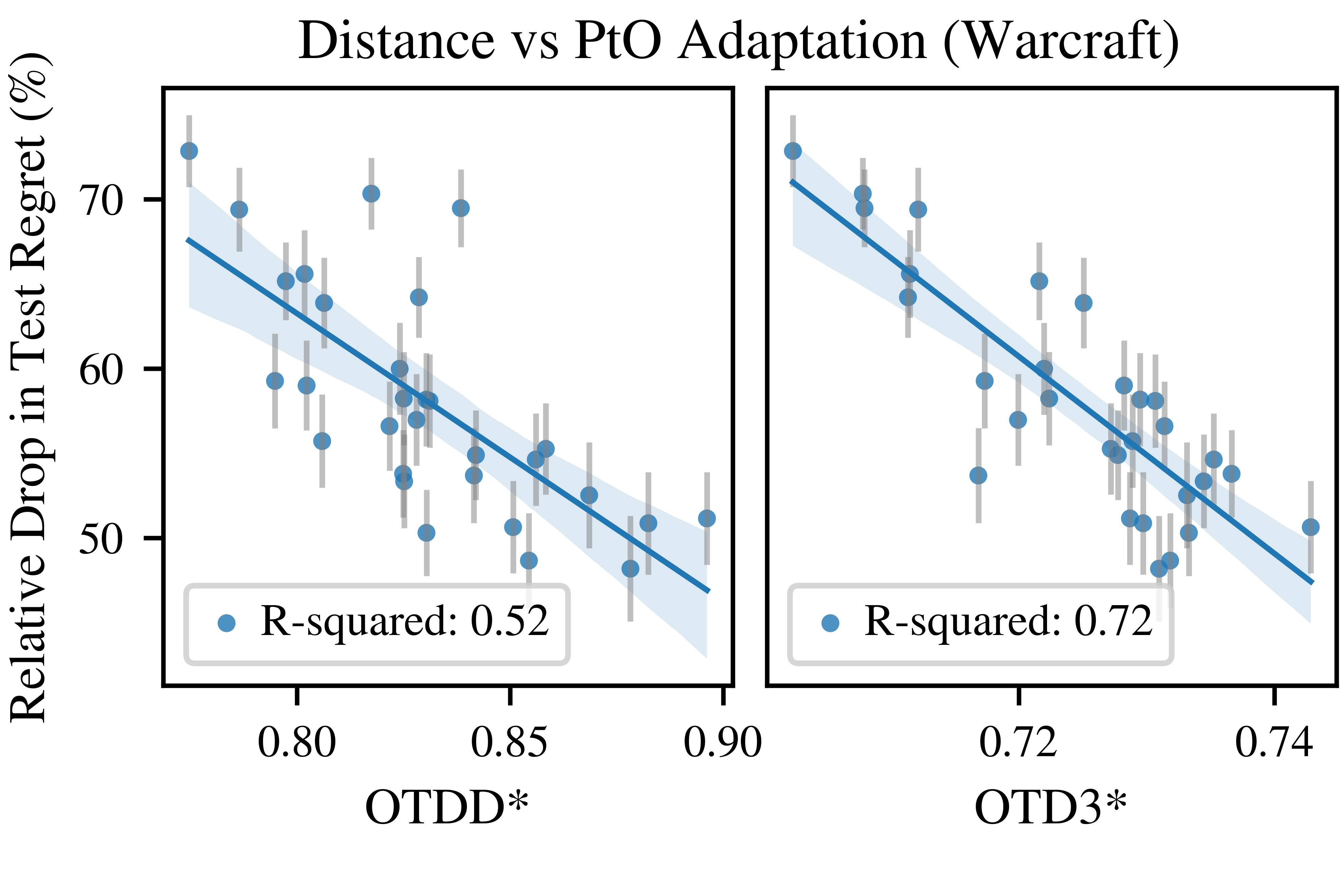}
    \caption{Dataset distance vs PtO adaptation in Warcraft. Results for a target dataset with 100 samples against 30 source datasets with 1,000 samples. Dataset distances OTDD and OTD$^3$ are computed with weights that maximize the correlation between distance and regret transferability.}
    \label{fig:regression}
\end{figure}

In Figure~\ref{fig:correlation_ntarget} we extend our analysis to varying sizes of the target dataset, ranging from 10 to 100 samples, which are used for dataset distance computation and fine-tuning, while keeping the source datasets and target test set fixed at 1,000 samples each. We compute dataset distances using a three-dimensional weight grid and compare the correlation achieved with equal input-output weighting ($\alpha_X=0.5, \alpha_Y=0.5$ for OTDD, $\alpha_X=0.5, \alpha_Y=0.25, \alpha_W=0.25$ for OTD$^3$) against OTDD* and OTD$^3$*.\looseness=-1

Incorporating decisions into the dataset distance, with appropriate weighting, consistently improves the predictability of PtO transferability across all target sample sizes. While tuning the decision component weight significantly boosts correlation, rapidly reaching $R^2>0.6$, OTD$^3$ outperforms OTDD from as few as 30 target samples onward, demonstrating its effectiveness even without extensive data for weight optimization.

\begin{figure}
    \centering
    \includegraphics[trim=0 7 0 5,clip, width=\columnwidth]{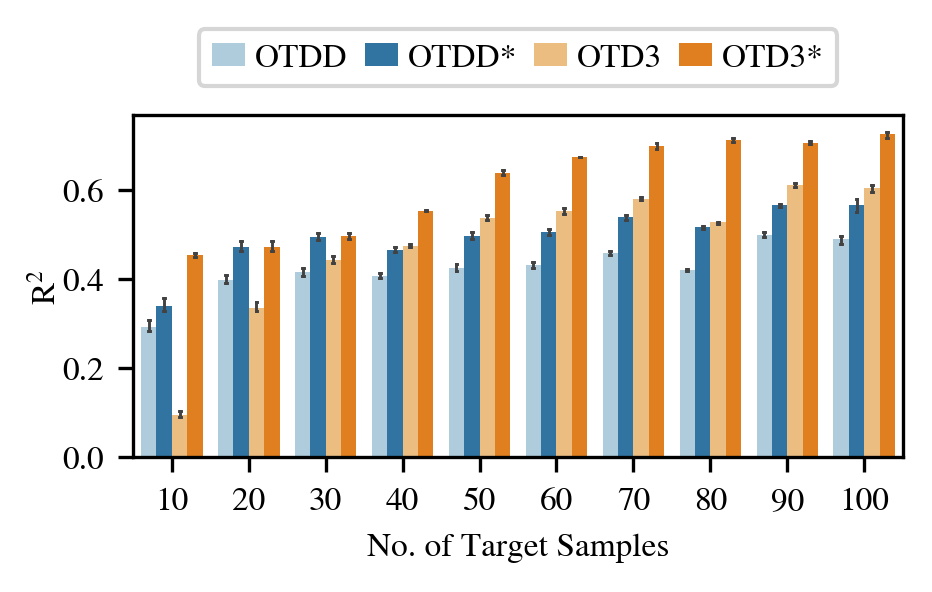}
\caption{Correlation between dataset distance and regret transferability ($R^2$) vs. target sample size, for four distance variants: OTDD with equal feature-label weights and optimized weights (OTDD*), and OTD$^3$ with equal output weights (0.5, 0.25, 0.25) and optimized weights (OTD$^3$*).}
    \label{fig:correlation_ntarget}
\end{figure}

\paragraph{Decision vs label component.} The advantage of including the decision component over the label component is less pronounced in the Inventory Stock problem (Fig.~\ref{fig:triplots_inventory}). Here, either the label or decision component with features still maintains a strong correlation between regret transfer and dataset distance. To explore this further, we examine how differences in the label space $d_y(y,y')$ correlate with differences in the decision space $l_q(y,y',z,z')$. In the Inventory Stock problem, there is a strong correlation between these differences (Appendix Fig. \ref{fig:correlation_inventory}), suggesting that decisions are closely tied to the labels. In contrast, the Warcraft domain lacks this strong correlation (Appendix Fig. \ref{fig:correlation_warcraft}), making the decision component more critical for accurately predicting transferability.

\subsection{Characterizing Target Shift Impact}
Target shift—where label distributions change while feature distributions remain constant—creates mismatches between training and test datasets, often degrading performance in supervised learning. However, our experimental results (Fig.~\ref{fig:regression}[left]) show that some source datasets with significant target shift—characterized by high feature-label distance—can still achieve low regret in the PtO task. This suggests that target shift may not impact PtO performance in the same way it affects purely predictive tasks. \looseness=-1

To further explore the impact of target shift in PtO tasks, we analyze the Warcraft setting under two downstream optimization tasks: minimizing path cost alone and minimizing both path cost and length. We apply the same transfer learning experiment from Section~\ref{sec:pred_transfer} it to these two tasks. Although the same target shifts are applied on both tasks, their effect on PtO transferability is less severe for minimizing path cost and length compared to minimizing cost alone (Fig.~\ref{fig:jointplot}). Our decision-aware dataset distance, using weights from Section~\ref{sec:pred_transfer}, effectively captures this behavior. The distance distribution for the task less impacted by the target shift is more left-skewed (Fig.~\ref{fig:joint_plot_decision}). In contrast, the dataset distance that only accounts for features and labels, is unable to differentiate between these two tasks (Fig.~\ref{fig:joint_plot_label}). \looseness=-1
\begin{figure}[]
    \centering
    \begin{subfigure}[b]{0.53\columnwidth}
        \centering
        \includegraphics[height=1.75in]{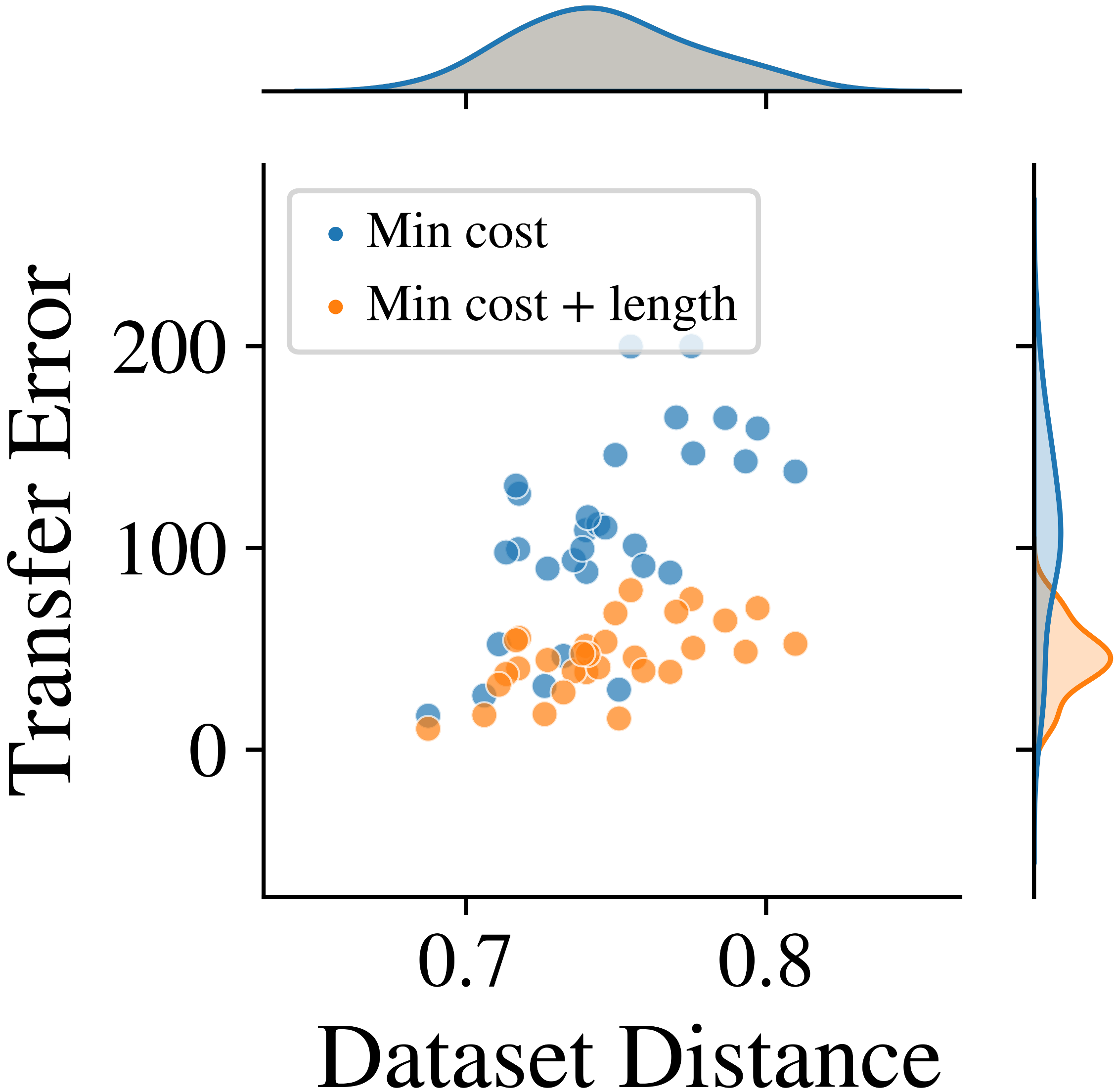} 
        \caption{Not including decisions}
        \label{fig:joint_plot_label}
    \end{subfigure}
    \hfill
    \begin{subfigure}[b]{0.44\columnwidth}
        \centering
        \includegraphics[height=1.75in]{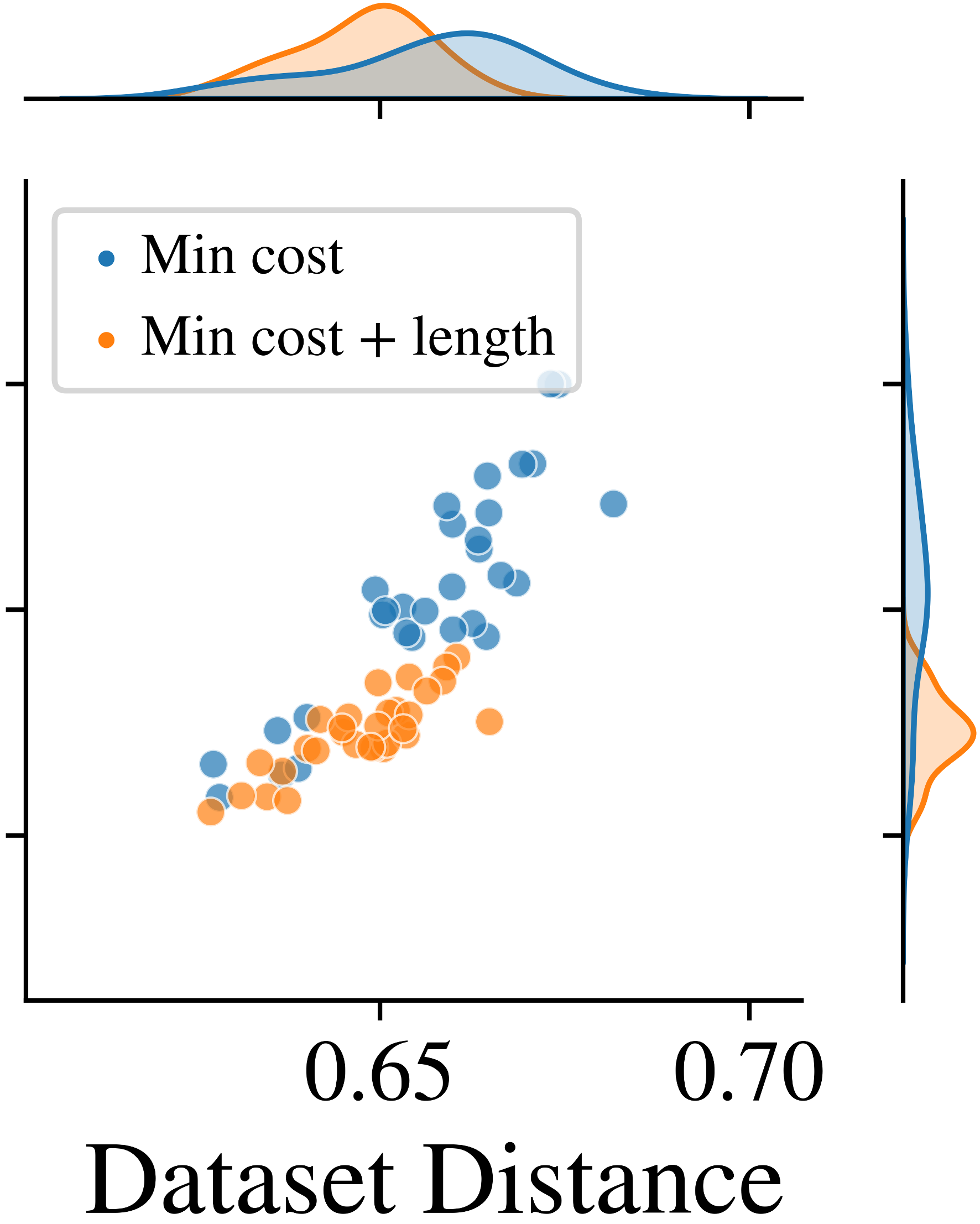}
        \caption{Including decisions}
    \label{fig:joint_plot_decision}
    \end{subfigure}
    \caption{Distance vs. Adaptation for two tasks in the Warcraft setting. Dataset distance is computed (a) without incorporating decisions, and (b) with decision incorporation.}
    \label{fig:jointplot}
\end{figure}




\subsection{Robustness to Model Complexity}

\begin{figure*}
    \includegraphics[width=\textwidth]{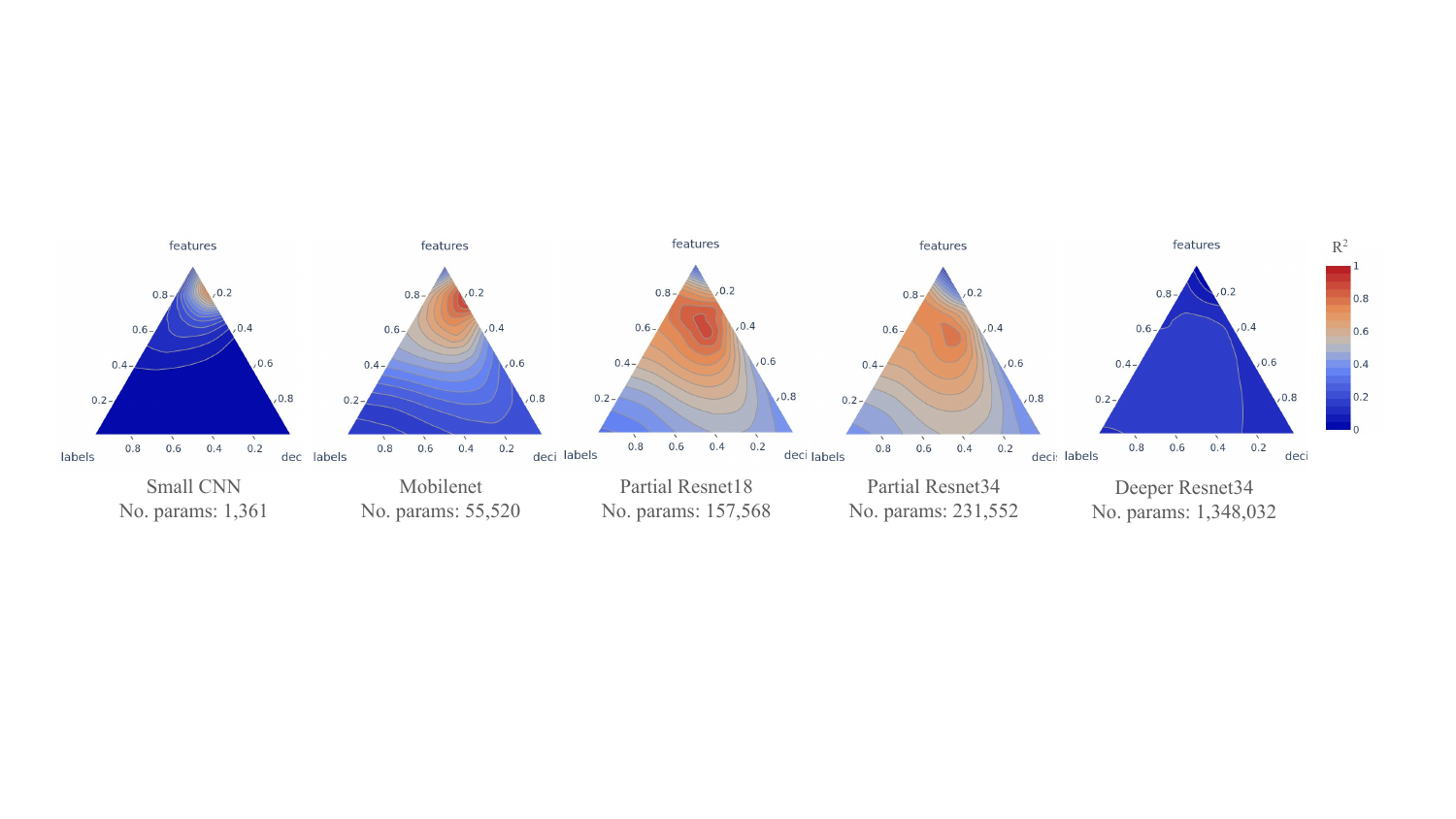}
    \caption{\textbf{OTD$^3$ Performance as Model Complexity Increases}. Predictability of OTD$^3$ (measured via R$^2$) across feature/label/decision weightings, shown for increasingly expressive model architectures on the Warcraft Shortest Path setting.
}
\label{fig:model_complexity}
\end{figure*}

We assess the robustness of OTD$^3$ by evaluating its performance across five model architectures of increasing complexity in the Warcraft setting (Figure~\ref{fig:model_complexity}). Although OTD$^3$ is model-agnostic by design, we measure its effectiveness through its ability to predict model transferability, specifically via $R^2$ values across weight configurations. In this setting, we find that intermediate-complexity models (Mobilenet, Partial ResNet18, Partial ResNet34) exhibit both high $R^2$ and broad regions of strong performance. This suggests that OTD$^3$ can reliably identify informative weightings when the underlying regret landscape is structured yet stable. The smoothness of these regions also indicates robustness to variations in component weights.

At the lower end of the complexity spectrum, the Small CNN displays low regret variance overall. Still, OTD$^3$ is able to highlight a narrow region of relative predictive strength—helping distinguish among otherwise uniformly weak configurations. In contrast, the most complex model (Deeper ResNet34) presents high regret variance, likely due to over-parameterization relative to the limited data. In this case, OTD$^3$ struggles to recover consistent patterns, reflecting the difficulty of transferability prediction in noisy, unstable regret landscapes.

These results suggest OTD$^3$ is most effective when regret variation is meaningful but not overly erratic, performing optimally when relationships between dataset distance and transferability are discernible and not obscured by noise from inappropriate model complexity. The robust performance across reasonably complex architectures highlights OTD$^3$'s practical utility in PtO scenarios.


\section{Discussion}
We introduce the first dataset distance tailored to PtO tasks, integrating features, labels, and decisions to better assess prediction-to-decision similarities. Our experiments show that incorporating decisions significantly improves transfer predictability, particularly in complex settings where label shifts do not directly correlate with decisions. This approach effectively captures task dynamics dictated by downstream optimization structures without requiring explicit analysis. Moreover, our framework is adaptable, allowing flexible weighting of components to provide meaningful comparisons across diverse PtO tasks—an essential feature for real-world applications where datasets vary not only in features and labels but also in decision complexity.

Several promising directions can extend our framework. Handling decision components of varying structures and dimensions using techniques like the Gromov-Wasserstein \citep{memoli_gromovwasserstein_2011} distance could bridge gaps between non-comparable decision spaces. Further refining the weighting of features, labels, and decisions—particularly through tuning methods independent of transferability measures—could enhance its utility. Additionally, adapting our approach to more intricate PtO structures, such as those where multiple feature-label pairs define a single decision, through a hierarchical OT framework \citep{yurochkin_hierarchical_2019}, could further improve its applicability.\looseness=-1

By establishing this first notion of dataset distance designed for PtO tasks, our work lays a foundation for future research, opening avenues for more robust and versatile transferability metrics in decision-aware learning.

\subsection*{Acknowledgments}

We thank Sanket Shah for insightful discussions throughout the development of this work. We also thank the reviewers and participants at the DMLR Workshop and the Humans, Algorithmic Decision-Making, and Society Workshop at ICML 2024 for their feedback on earlier versions, and the UAI reviewers for their useful feedback that significantly improved this work.\looseness=-1

PRD acknowledges support from the NSF under the AI Institute for Societal Decision Making (AI-SDM), Award No. 2229881. KW acknowledges support from NSF IIS-2403240 and the Schmidt Sciences AI2050 Fellowship. DAM acknowledges support from the Kempner Institute, the Aramont Fellowship Fund, and the FAS Dean’s Competitive Fund for Promising Scholarship.

\bibliographystyle{plainnat}
\bibliography{references.bib}

\newpage
\onecolumn
\title{What is the Right Notion of Distance between Predict-then-Optimize Tasks?\\(Supplementary Material)}
\maketitle

\appendix
\section{Proof of proposition~\ref{prop:metric}}

To demonstrate that the OTD$^3$, $d_{OT}(\cdot, \cdot; c_{PtO})$ is a valid metric, it is sufficient to verify that the ground cost function $c_{PtO}$ used in the optimal transport problem is a metric on $\X \times \Y \times \Omega$. If $c_{PtO}$ is indeed a metric, then $d_{OT}(\cdot, \cdot; c_{PtO})$ corresponds to the Wasserstein distance \cite{villani_optimal_2008}. In Equation~\ref{eq:cpto}, $d_{OT}(\cdot, \cdot; c_{PtO})$ is defined as a convex combination of $d_{\X}$ and $d_{\Y}$, which are metrics on $\X$ and $\Y$ respectively, and the decision quality disparity $l_q$. To show that $c_{PtO}$ is a metric, it suffices to show that $l_q$ satisfies the four metric properties: non-negativity, identity of indiscernibles, symmetry, and the triangle inequality. If $l_q$ does not individually satisfy these properties, we must demonstrate that the convex combination of $d_{\X}$, $d_{\Y}$, and $l_q$ satisfies these properties collectively under the assumption that $\alpha_X,\alpha_Y,\alpha_W > 0$.

First, $l_q$ is clearly non-negative because it is defined as an absolute value. It is symmetric in the convex combination of $c_{PtO}$ because it is taken as the absolute difference between two decision qualities with fixed true costs. 
\begin{align*}
    \ldq(z, z'; y', y') 
    &= \big| q(z; y') - q(z'; z') \big| \\
    &= \big| q(z'; y') - q(z; z') \big| \\
    &= \ldq(z', z; y', y')
\end{align*}
Moreover, $l_q$ satisfies triangle inequality due to the triangle inequality property of the absolute value.
\begin{align*}
    &\ldq(z_1, z_2; y_1, y_2) + \ldq(z_2, z_3; y_2, y_3) \\
    &= \big| g(z_1; y_1) - g(z_2; y_2) \big| + \big| g(z_2; y_2) - g(z_3; y_3) \big| \\
    &\leq \big| g(z_1; y_1) - g(z_2; y_2) + g(z_2; y_2) - g(z_3; y_3) \big| \\
    &= \big| g(z_1; y_1)  - g(z_3; y_3) \big| \\
    &= \ldq(z_1, z_3; y_1, y_3)
\end{align*}
Lastly, while $l_q$ might not satisfy the identity of indiscernibles in isolation (specifically, $l_q(y, y'; z, z) = 0$ does not necessarily imply $y = y'$; meaning two different decisions can lead to the same objective value), $c_{PtO}$ does satisfy this property for $\alpha_X,\alpha_Y,\alpha_W > 0$. If $(x,y,z) = (x',y',z')$, then $\ldq(z, z'; y', y') = \big| g(z; y) - g(z'; y) \big| = 0$ because $z=z'$ implies $g(z; y) = g(z'; y)$ and hence $c_{PtO}((x, y, z), (x', y', z'))=0$. Conversely, if $c_{PtO}((x, y, z), (x', y', z')) = 0$, then $d_{\X}(x, x') = 0$, $d_{\Y}(y, y') = 0$, and $l_q(y, y'; z, z) = 0$ because $\alpha_X,\alpha_Y,\alpha_W > 0$. Since $d_{\Y}(y, y') = 0$ implies $y = y'$ (because $d_{\Y}$ is a metric), it follows that $w^*(y) = w^*(y')$ and hence $z = z'$.

Therefore, $c_{PtO}$ satisfies the identity of indiscernibles. Consequently, since $l_q$ satisfies non-negativity, symmetry, and the triangle inequality, and since $c_{PtO}$ satisfies the identity of indiscernibles, $d_{OT}(\cdot, \cdot; c_{PtO})$ is indeed a valid metric with $c_{PtO}$ a valid metric on $\X \times \Y \times \Omega$.

\section{Preamble for Theorem~\ref{theo1}}

\subsection{Validity Assumption \ref{assump:1}}
Assumption~\ref{assump:1} imposes a specific structure on the downstream optimization problem by assuming that the decision quality function has a bounded rate of change with respect to both the predicted and true cost vectors. This is a reasonable assumption for certain downstream optimization tasks, as highlighted in the following lemmas. 

\begin{lemma}
\label{lemma1}
    If $M(\cdot)$ is a convex program with a strongly convex objective and constraints with independent derivatives (Linear Independence Constraint Qualification (LICQ)), Assumption~\ref{assump:1} holds. 
\end{lemma}
The strong convexity of the objective ensures that the gradient is Lipschitz continuous, while the LICQ guarantees that the optimal solutions depend continuously on the parameters. By the smoothness of the objective and the continuity of the optimal solutions, the difference in the decision quality function $q$ between two sets of parameters and their corresponding optimal solutions can be bounded by a linear combination of the distances between the parameters and the distances between the optimal solutions.

\begin{lemma}
\label{lemma2}
    If $M(\cdot)$ has a linear optimization objective with a strongly convex feasible region, Assumption \ref{assump:1} holds.
\end{lemma}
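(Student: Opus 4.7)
The plan is to reduce the $k_1, k_2$-Lipschitz condition on $\dq$ to (i) a triangle-inequality decomposition that exploits the bilinearity of a linear objective in its decision and cost arguments, and (ii) a Lipschitz stability estimate for the argmax map $w^*(\cdot)$ under perturbations of the cost vector, which is where the strong convexity of $\Omega$ plays its role.

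Concretely, since $g(w;y) = \langle y, w\rangle$, we have $\dq(\hat y, y) = \langle y, w^*(\hat y)\rangle$, and for any predictions $y_1, y_2$ and true labels $y_1^*, y_2^*$, adding and subtracting $\langle y_1^*, w^*(y_2)\rangle$ followed by Cauchy--Schwarz yields
\begin{align*}
|\dq(y_1, y_1^*) - \dq(y_2, y_2^*)|
&\le \norm{y_1^*}\,\norm{w^*(y_1) - w^*(y_2)} \\
&\quad + \norm{w^*(y_2)}\,\norm{y_1^* - y_2^*}.
\end{align*}
Strong convexity of $\Omega$ forces $\Omega$ to be bounded, so $\sup_{w\in\Omega}\norm{w} \le R < \infty$ controls the factor $\norm{w^*(y_2)}$; together with the standing boundedness of $\C$, the factor $\norm{y_1^*}$ is likewise controlled by some $Y_{\max}$.

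The substantive step is to bound $\norm{w^*(y_1) - w^*(y_2)} \le C\,\norm{y_1 - y_2}$ for a constant $C$. This is a classical stability result for the support map of a strongly convex body: if $\Omega$ is $\mu$-strongly convex (the midpoint of any chord can be inflated by a ball of radius proportional to $\mu$ times the squared chord length while still remaining in $\Omega$), then for cost vectors whose norm is bounded away from zero the maximizer $y \mapsto w^*(y)$ is single-valued and Lipschitz with constant of order $1/\mu$. I would derive this by writing the two optimality inequalities at an inflated midpoint $w_{\mathrm{mid}} \in \Omega$, summing them, and rearranging: the strong-convexity inflation contributes a term quadratic in $\norm{w^*(y_1) - w^*(y_2)}$ on the left, while the right-hand side is linear in $\norm{y_1 - y_2}\cdot\norm{w^*(y_1) - w^*(y_2)}$, yielding the desired Lipschitz bound after a single cancellation.

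The main obstacle is this argmax stability lemma, and in particular the degeneracy of the linear program at cost vectors close to zero, where the maximizer is ill-conditioned. I would handle this either by imposing (or observing) a norm-lower-bound on $\C$ that is natural in PtO settings where costs carry economic meaning, or by invoking standard results on support functions of strongly convex sets from the convex-geometry literature. Granted the estimate, setting $k_1 := Y_{\max}\cdot C$ and $k_2 := R$ in the decomposition above recovers exactly the inequality of Assumption~\ref{assump:1}, completing the proof.
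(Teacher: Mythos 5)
The paper states Lemma~\ref{lemma2} without providing a proof, so there is no argument of record to compare yours against; I can only assess your proposal on its own terms. Your overall strategy is the natural one and is essentially correct in outline: the bilinear decomposition $|\langle y_1^*, w^*(y_1)\rangle - \langle y_2^*, w^*(y_2)\rangle| \le \norm{y_1^*}\,\norm{w^*(y_1)-w^*(y_2)} + \norm{w^*(y_2)}\,\norm{y_1^*-y_2^*}$ correctly isolates the two Lipschitz constants of Assumption~\ref{assump:1}, the boundedness of a strongly convex $\Omega$ (it is an intersection of balls of radius $1/\mu$, hence has diameter at most $2/\mu$) legitimately controls $\norm{w^*(y_2)}$, and your sketch of the argmax stability estimate is sound: summing the two optimality inequalities evaluated at the inflated midpoint gives $\tfrac12\langle y_1-y_2,\,w_1-w_2\rangle \ge c\,\mu\,\norm{w_1-w_2}^2(\norm{y_1}+\norm{y_2})$, and Cauchy--Schwarz on the left yields $\norm{w_1-w_2} \lesssim \norm{y_1-y_2}/\bigl(\mu(\norm{y_1}+\norm{y_2})\bigr)$.

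The genuine gap is the one you flag and then defer rather than close: the denominator $\norm{y_1}+\norm{y_2}$ in that stability bound is not a removable artifact of the proof technique. The support-point map of a strongly convex body is genuinely discontinuous at the origin (for $\Omega$ the unit ball, $w^*(y)=y/\norm{y}$, and $q(\hat y, y)=\langle y, \hat y/\norm{\hat y}\rangle$ jumps by $2\norm{y}$ between $\hat y = \epsilon y$ and $\hat y = -\epsilon y$), so Assumption~\ref{assump:1}, which quantifies over \emph{all} $y,y^*,z,z^* \in \C$, is simply false whenever $\C$ contains a neighborhood of the origin. Consequently your argument does not prove the lemma as literally stated; it proves it under the two additional hypotheses you mention in passing ($\C$ bounded above in norm, to control $\norm{y_1^*}$, and bounded away from zero in norm, to control the argmax stability constant). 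These need to be promoted from remarks to explicit assumptions, with $k_1 = Y_{\max}\cdot 2/(\mu\, y_{\min})$ and $k_2 = R \le 1/\mu$ stated in terms of them; alternatively the lemma itself should be read as implicitly restricted to cost vectors in a fixed annulus. With that amendment made explicit, your proof is complete.
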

When $M(\cdot)$ has a linear optimization objective and a strongly convex feasible region, the decision quality function $q$ satisfies the $k_1,k_2$-Lipschitz property. The linearity of the objective ensures that changes in the parameters lead to proportional changes in the objective value, while the strong convexity of the feasible region guarantees that the optimal solutions are unique and vary smoothly with respect to the parameters. This smooth dependence, combined with the linear structure of the objective, implies that the difference in $q$ between two sets of parameters and their corresponding optimal solutions can be bounded by a linear combination of the distances between the parameters and the distances between the optimal solutions.

\subsection{Lipschitzness of the Decision Quality Disparity Function}
To establish the bound presented in Theorem \ref{theo1}, we rely on the fact that $\ldq$ is $k_1,k_2$-Lipschitz under Assumption \ref{assump:1}. The following proposition demonstrates that $\ldq$ indeed satisfies the Lipschitz condition given this assumption.

\begin{proposition}
If $g$, the objective function of the downstream optimization problem, is $k_1,k_2$-Lipschitz (Assumption \ref{assump:1}), then $\ldq$ is also $k_1,k_2$-Lipschitz.
\end{proposition}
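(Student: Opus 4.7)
The plan is to exploit the fact that $\ldq$ is defined as an absolute difference of two evaluations of $g$, so its Lipschitz behavior inherits directly from that of $g$ via two applications of the triangle inequality. Concretely, I would start from the definition $\ldq(z,z';y,y') = |g(z;y)-g(z';y')|$ and examine the difference $|\ldq(z_1,z_1';y_1,y_1') - \ldq(z_2,z_2';y_2,y_2')|$ for two arbitrary argument tuples. The first step is to apply the reverse triangle inequality $\big||a|-|b|\big| \leq |a-b|$ to strip the outer absolute values, which yields an expression of the form $|(g(z_1;y_1)-g(z_2;y_2)) - (g(z_1';y_1')-g(z_2';y_2'))|$.

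Next, I would apply the standard triangle inequality to split this single absolute value into $|g(z_1;y_1)-g(z_2;y_2)| + |g(z_1';y_1')-g(z_2';y_2')|$. Each of these summands is now of exactly the form controlled by the $(k_1,k_2)$-Lipschitz hypothesis on $g$, so invoking that hypothesis twice gives an overall bound of $k_1(\norm{z_1-z_2}+\norm{z_1'-z_2'}) + k_2(\norm{y_1-y_2}+\norm{y_1'-y_2'})$. This is the desired Lipschitz estimate for $\ldq$ as a function of its four arguments.

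The main obstacle is not computational but notational. Assumption~\ref{assump:1} is stated for the decision quality function $\dq$, whose arguments live in the label space $\C$, whereas the proposition takes the Lipschitz hypothesis to apply to $g$, whose first argument lives in the decision space $\Omega$. Before writing the chain above, I would therefore need to spell out precisely what ``$g$ is $k_1,k_2$-Lipschitz'' means in this context, namely $|g(z;y)-g(z';y')| \leq k_1\norm{z-z'} + k_2\norm{y-y'}$, and argue that this is the natural analogue of Assumption~\ref{assump:1} transplanted from $\dq$ to $g$ (so that Assumption~\ref{assump:1} follows by composing with the oracle $w^*$ whenever $w^*$ is suitably regular). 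Once that interpretation is pinned down, the remainder of the argument is the clean two-inequality calculation sketched above, with no hidden difficulty.
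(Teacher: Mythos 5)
Your proof is correct and rests on the same three ingredients as the paper's: the reverse triangle inequality $\big|\,|a|-|b|\,\big|\leq|a-b|$, the ordinary triangle inequality, and a Lipschitz hypothesis on $g$ of the form $|g(z;y)-g(z';y')|\leq k_1\norm{z-z'}+k_2\norm{y-y'}$. The difference is one of generality and of decomposition. The paper only bounds $|\ldq(z,z_1;y,y_1)-\ldq(z,z_2;y,y_2)|$, holding the first decision--label pair fixed; after the reverse triangle inequality the shared term $g(z;y)$ cancels, leaving the single difference $|g(z_2;y_2)-g(z_1;y_1)|$, which is then split by inserting the intermediate value $g(z_1;y_2)$ and applying the Lipschitz property coordinate-wise. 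You instead perturb all four arguments, obtain two differences of $g$-values with no cancellation available, and apply the joint Lipschitz bound to each, arriving at $k_1(\norm{z_1-z_2}+\norm{z_1'-z_2'})+k_2(\norm{y_1-y_2}+\norm{y_1'-y_2'})$. Your statement is strictly stronger: it reduces to the paper's bound when $(z_1,y_1)=(z_2,y_2)$, and it also directly covers the variant actually invoked in the proof of Theorem~\ref{theo1}, where the \emph{second} pair is held fixed and the first varies---a case the paper's displayed computation handles only implicitly, via the symmetry of the construction. Your preliminary remark about the mismatch between Assumption~\ref{assump:1} (stated for $\dq$, a function of two cost vectors) and the proposition's hypothesis (Lipschitzness of $g$ in a decision and a cost vector) is also well taken: the paper silently adopts exactly the reading you propose in its final inequality, so pinning down that interpretation is the right move and your argument then goes through without further difficulty.
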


\begin{proof}
    \begin{align}
    &\big| \ldq(z,z_1; y,y_1) - \ldq(z,z_2; y,y_2) \big| \notag \\
    &= \big| \abs{g(z;y)-g(z_1;y_1)} - \abs{g(z;y)-g(z_2;y_2)} \big| \notag \\
    &\leq \big| g(z;y) - g(z_1;y_1) - g(z;y) + g(z_2;y_2) \big| \label{eq:kl1}\\
    &= \big|g(z_2;y_2) - g(z_1;y_1) \big| \notag \\
    &= \big|g(z_2;y_2) - g(z_1;y_2) + g(z_1;y_2) - g(z_1;y_1)  \big| \notag \\
    &\leq \big|g(z_2;y_2) - g(z_1;y_2)\big| + \big|g(z_1;y_2) - g(z_1;y_1) \big| \label{eq:kl2}\\
    &\leq k_1  \norm{z_1-z_2} + k_2 \norm{y_1-y_2} \label{eq:kl3}
\end{align}
Inequalities (\ref{eq:kl1}) and (\ref{eq:kl2}) are a result of the triangle inequality of the absolute value. Inequality (\ref{eq:kl3}) is due to the $k_1-k_2$-lipschitzness of $g$.
\end{proof}

\section{Proof of Theorem~\ref{theo1}} \label{sec:proof_theorem}

\begin{theorem}    
    Suppose Assumption \ref{assump:1} holds. For a feature space $\X$, a label space $\Y$, and a decision set $\Omega$, let $\Z := \X \times \Y \times \Omega$. Let $\Pd_{S}$ and $\Pd_{T}$ be the source and target distributions over $\X \times \Y$ respectively. For any labeling function $f:\X \to \Y$, let $\Pd_{T}^{f}$ and $\Pd_{S}^{*}$ be distributions over $\Z$ given by $\Pd_{T}^{f} := (x,y,w^*(f(x)))_{(x,y) \sim \Pd_T}$ and $\Pd_{S}^{*} := (x,y,w^*(y))_{(x,y) \sim \Pd_S}$. For a ground cost function of the form \looseness=-1
    \begin{align*}
        c_{PtO}^{\bm{\alpha}}((x,y,z),(x',y',z')) = \alpha_X d_{\X}(x,x') + \alpha_Y d_{\Y}(y,y') + \alpha_W \ldq(z,z';y',y'), \notag
    \end{align*}
    let $\Pi^*$ be the coupling that minimizes the OT problem with ground cost $c_{PtO}^{\bm{\alpha}}$ between $\Pd_{T}^{f}$ and $\Pd_{S}^{*}$. Let $\Tilde{f}$ be a labeling function that is $\phi$-Lipschitz transferable w.r.t. $\Pi^*$. We assume $\X$ is bounded by $K$ and $\Tilde{f}$ is $l$-Lipschitz, such that $|\Tilde{f}(x_1)-\Tilde{f}(x_2)| \leq 2lK = L$. Then, for all $\lambda > 0$ and $\alpha_W \in (0,1)$ such that $(\lambda k_1 + k_2 + 1)\alpha_W = 1$, and $\alpha_X = \lambda k_1 \alpha_W$ and $\alpha_Y = k_2\alpha_W$, we have with probability at least $1-\delta$ that:
    \begin{align*}
        err(f; \dqreg, \Pd_T)  \leq \ err(\Tilde{f}; \dqreg, \Pd_S) + err(\Tilde{f}; \dqreg, \Pd_T) + k_1L\phi(\lambda)
        + (1/\alpha_{W}) d_{OT}(\Pd_{T}^{f}, \Pd_{S}^{*} \ ; c_{PtO}^{\bm{\alpha}})
    \end{align*}
\end{theorem}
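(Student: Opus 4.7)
The plan is to adapt the joint-distribution optimal-transport argument of~\citet{courty_joint_2017} to the PtO setting, where the error of interest is the non-symmetric decision-quality regret $\dqreg$. The strategy is to express $err(f;\dqreg,\Pd_T)$ as an expectation under the optimal coupling $\Pi^*$ between $\Pd_T^f$ and $\Pd_S^*$, and then to use a sequence of triangle inequalities in the real-valued decision-quality space with $\tilde{f}$ as a bridging hypothesis. Each triangle step either extracts one of the regret terms on the right-hand side (source or target regret of $\tilde{f}$), a decision-disparity remainder matching the ground cost of the OT, or a cross-domain Lipschitz-transfer remainder.

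Concretely, I would first peel off $err(\tilde{f};\dqreg,\Pd_T)$ from $\E_{(x_t,y_t)\sim\Pd_T}|\dq(f(x_t),y_t)-\dq(y_t,y_t)|$ by inserting $\dq(\tilde{f}(x_t),y_t)$ and applying the triangle inequality of the absolute value; using that $\Pd_T$ is the $(x,y)$-marginal of $\Pd_T^f$, and hence of $\Pi^*$, the remaining cross term becomes an expectation under $\Pi^*$. Inserting successively $\dq(\tilde{f}(x_s),y_s)$ and then $\dq(y_s,y_s)$ in two further triangle steps isolates: (i) $\dqreg(\tilde{f}(x_s),y_s)$, whose expectation is $err(\tilde{f};\dqreg,\Pd_S)$; (ii) a cross-distribution term $|\dq(f(x_t),y_t)-\dq(y_s,y_s)| = |g(w^*(f(x_t));y_t)-g(w^*(y_s);y_s)| = \ldq(z_t^f,z_s^*;y_t,y_s)$, which is precisely the decision component of $c_{PtO}^{\bm{\alpha}}$; and (iii) a cross-domain term $|\dq(\tilde{f}(x_s),y_s)-\dq(\tilde{f}(x_t),y_t)|$ that Assumption~\ref{assump:1} upper-bounds by $k_1\|\tilde{f}(x_s)-\tilde{f}(x_t)\| + k_2\,d_{\Y}(y_s,y_t)$. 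The probabilistic transfer Lipschitzness of $\tilde{f}$ on the feature marginal of $\Pi^*$ then replaces $\E_{\Pi^*}\|\tilde{f}(x_s)-\tilde{f}(x_t)\|$ by $\lambda\E_{\Pi^*}d_{\X}(x_s,x_t) + L\phi(\lambda)$ via a good-event / bad-event split (good event: probability at least $1-\phi(\lambda)$, bound $\lambda\,d_{\X}$; bad event: boundedness of $\X$ and $l$-Lipschitzness of $\tilde{f}$ give bound $L$).

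Collecting all residuals, the weight conditions $\alpha_X=\lambda k_1\alpha_W$ and $\alpha_Y=k_2\alpha_W$ collapse the combined feature, label, and decision contributions exactly into $(1/\alpha_W)\,d_{OT}(\Pd_T^f,\Pd_S^*;c_{PtO}^{\bm{\alpha}})$, leaving the additive term $k_1 L\phi(\lambda)$; the ``with probability at least $1-\delta$'' qualifier is then read as absorbing the Lipschitz-failure probability via $\delta=\phi(\lambda)$. The main obstacle is the careful choice of the two inserted anchor points $(\tilde{f}(x_s),y_s)$ and $(y_s,y_s)$ so that the cross-distribution residual in (ii) matches a \emph{single} decision-disparity component of the ground cost without introducing an extraneous $d_{\Y}$ correction; for the constants to line up with the stated weights this requires reading the decision disparity in $c_{PtO}^{\bm{\alpha}}$ as $\ldq(z,z';y,y')=|g(z;y)-g(z';y')|$ with labels attached to each respective decision (the convention used in the triangle-inequality proof of $\ldq$ in the appendix), so that $\ldq(z_t^f,z_s^*;y_t,y_s)$ arises directly, with no label-distance penalty to absorb.
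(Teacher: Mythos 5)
Your proposal is correct and follows essentially the same route as the paper's proof: the same bridge through $\Tilde{f}$, the same chain of triangle inequalities on the four real values $g(w^*(f(x_t));y_t)$, $g(w^*(\Tilde{f}(x_t));y_t)$, $g(w^*(\Tilde{f}(x_s));y_s)$, $g(w^*(y_s);y_s)$ taken under the optimal coupling $\Pi^*$, the same good/bad-event treatment of probabilistic transfer Lipschitzness yielding $\lambda k_1 d_{\X}+k_1L\phi(\lambda)$, and the same reweighting of the residual into $(1/\alpha_W)\,d_{OT}(\Pd_T^f,\Pd_S^*;c_{PtO}^{\bm{\alpha}})$. Your closing caveat about which labels the decision-disparity term carries is well taken: the term that actually arises from the reverse triangle inequality is $\ldq(z_t^f,z_s;y_t,y_s)=|g(z_t^f;y_t)-g(z_s;y_s)|$, and the paper's own proof makes the same silent identification when it writes the final ground-cost term, so your explicit handling of this convention is, if anything, more careful than the original.
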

\begin{proof}
\begin{align}
    er&r(f; \dqreg, \Pd_T) \notag \\
    &= \Et \ldq(w^*(f(x)),w^*(y);y,y) \notag \\
    &\leq \Et \ldq(w^*(f(x)),w^*(\Tilde{f}(x));y,y) + \Et \ldq(w^*(\Tilde{f}(x)),w^*(y);y,y) \label{eq:bound1}\\
    &=  \Et \ldq(w^*(\Tilde{f}(x)), w^*(f(x));y,y) + \Et \ldq(w^*(\Tilde{f}(x)),w^*(y);y,y) \label{eq:bound2}\\
    &=  \Etf \ldq(w^*(\Tilde{f}(x)), z;y,y) + \Et \ldq(w^*(\Tilde{f}(x)),w^*(y);y,y) \label{eq:bound3} \\
    &=  \Etf \ldq(w^*(\Tilde{f}(x)), z;y,y) - err(\Tilde{f}; \dqreg, \Pd_S) + err(\Tilde{f}; \dqreg, \Pd_S) + err(\Tilde{f}; \dqreg, \Pd_T) \notag \\
    &=  \Etf \ldq(w^*(\Tilde{f}(x)), z;y,y) - \Etstar \ldq(w^*(\Tilde{f}(x)), z;y,y) + err(\Tilde{f}; \dqreg, \Pd_S) + err(\Tilde{f}; \dqreg, \Pd_T) \notag \\
    &\leq  \big| \Etf \ldq(w^*(\Tilde{f}(x)), z;y,y) - \Etstar \ldq(w^*(\Tilde{f}(x)), z;y,y)\big| + err(\Tilde{f}; \dqreg, \Pd_S) + err(\Tilde{f}; \dqreg, \Pd_T) \notag
\end{align}

Inequality (\ref{eq:bound1}) uses the fact that $\ldq(\ \cdot \ ;y,y)$ satisfies the triangle inequality and line (\ref{eq:bound2}) is due to the symmetry of $\ldq(\ \cdot \ ;y,y)$ for any $y \in \C$. Line (\ref{eq:bound3}) comes from the fact that $\Pd_{T}^{f} := (x,f(x),y)_{(x,y) \sim \Pd_T}$. We continue by bounding the first term.
\begin{align}
    &\big|\Etf \ldq(w^*(\Tilde{f}(x)), z;y,y) - \Etstar \ldq(w^*(\Tilde{f}(x)), z;y,y)\big| \notag \\[0.1cm]
    &=  \abs{\int_{\Z} \ldq(w^*(\Tilde{f}(x)),z;y,y)(\Pd_{T}^{f}(X=x, Y=y, Z=z)-\Pd_{S}^{*}(X=x, Y=y, Z=z))\dt x \dt y \dt z} \notag \\
    &=  \abs{\int_{\Z} \ldq(w^*(\Tilde{f}(x)),z;y,y) \dtproof}
        \notag \\
    &\leq  \int_{\Z^2} 
        \abs{\ldq(\Tilde{z}_t,z_{t}^{f};y_t,y_t)
        -\ldq(\Tilde{z}_s,z_{s};y_s,y_s)}
        \dtproofshort
        \label{eq:bound4}\\ 
    &\leq   \int_{\Z^2} 
            \Big|\ldq(\Tilde{z}_t,z_{t}^{f};y_t,y_t) 
            - \ldq(\Tilde{z}_s,z_{t}^{f};y_s,y_t)\Big|
            + \Big|\ldq(\Tilde{z}_s,z_{t}^{f};y_s,y_t) 
            - \ldq(\Tilde{z}_s,z_{s},;y_s,y_s)\Big| 
            \dtproofshort
            \label{eq:bound5}\\   
    &\leq   \int_{\Z^2} 
            k_1 d_\C(\Tilde{f}(x_t),\Tilde{f}(x_s)) + k_2 d_\C({y_{t},y_s})
            + \Big|\ldq(\Tilde{z}_s,z_{t}^{f};y_s,y_t) 
            - \ldq(\Tilde{z}_s,y_{s};y_s,y_s)\Big| 
            \dtproofshort
            \label{eq:bound6}\\
    &\leq   \ k_1L\phi(\lambda) + \int_{\Z^2} 
            \lambda k_1d_\mathcal{X}(x_t,x_s) 
            + k_2 d_\C({y_{t},y_s})
            + \Big|\ldq(\Tilde{z}_s,z_{t}^{f};y_s,y_t) - \ldq(\Tilde{z}_s,y_{s};y_s,y_s)\Big| 
            \dtproofshort\\
    &\leq   \ k_1L\phi(\lambda) + \int_{\Z^2} 
            \lambda k_1 d_\mathcal{X}(x_t,x_s) 
            + k_2 d_\C({y_{t},y_s}) + \ldq(z_{t}^{f},z_s;y_s,y_s) 
            \dtproofshort
            \notag
\end{align}

From line (\ref{eq:bound4}) onwards we take $\vb{w}_{s}:=(x_s,y_s,y_s), \vb{w}_{t}^{f}:=(x_t,y_{t}^{f},y_t)$ and $\Tilde{z}_s = w^*(\Tilde{f}(x_s)), \Tilde{z}_t = w^*(\Tilde{f}(x_t))$ for ease of notation. Given a weight $\alpha_W$, we now normalize the last term such that the ground cost function is a convex combination of $d_\X$, $d_\Y$m and $\ldq$.

\begin{align}
    & \int_{\Z^2} \lambda k_1 d_\mathcal{X}(x_t,x_s) 
    + k_2 d_\C({y_{t},y_s}) + \ldq(z_{t}^{f},z_s;y_s,y_s) 
    \dtproofshort
    \notag \\
    &= \frac{1}{\alpha_W}\int_{\Z^2} 
    \lambda k_1 \alpha_W  d_\X(x_t,x_s) 
    + k_2 \alpha_W d_\C(x_t,x_s)
    + \alpha_W  \ldq(z_{t}^{f},z_s;y_s,y_s) 
    \dtproofshort
    \notag \\
    &= \frac{1}{\alpha_W} \ d_{OT}(\Pd^{f}_{T},\Pd^{*}_{S}; c_{PtO}^{\bm{\alpha}}) \notag
\end{align}
\end{proof}

\section{Experimental Settings Details} \label{app:experiments}
\subsection{Linear Model Top-K \cite{shah_decision-focused_2022}}
\paragraph{PtO task description.} The Linear Model Top-K setting is a learning task designed to evaluate decision-focused learning approaches in scenarios where the true relationship between features and outcomes is nonlinear, yet the model used for prediction is constrained to be linear. Specifically, the objective is to train a linear model to perform top–$K$ selection when the underlying data is generated by a cubic polynomial function. This controlled setup enables an assessment of how well decision-focused methods handle model misspecification. The predict-then-optimize (PtO) task in this setting is defined as follows:

\begin{itemize}[leftmargin=0.5cm]
    \item[] \textit{Predict:} Given the feature $x_n \sim \Pd_\X$, where $\Pd_\mathcal{X} = \text{Unif}[-1, 1]$, of a resource $n$, the prediction tasks consists of using a linear model to predict the corresponding utility $\hat{y}_n$, where the true utility $y_n = p(x_n)$ is a cubic polynomial in $x_n$. The predictions for $N$ resources are aggregated into a vector $\hat{\bm{y}} = [\hat{y}_1, \ldots, \hat{y}_N]$, where each element corresponds to the predicted utility of a resource.
    
    \item[] \textit{Optimize:} The optimization task involves selecting the $K$ out of $N$ resources with the highest utility. This corresponds to solving the optimization problem $M(\hat{\bm{y}}) = \max_{\bm{z} \in [0,1]^N} \{\bm{z} \cdot \sigma_x(\hat{\bm{y}})\}$ such that $||\bm{z}||_0 = K$, where $\sigma_x$ is the permutation that orders $\hat{\bm{y}}$ in ascending order of $\bm{x} = [x_1, \ldots, x_N]$.
\end{itemize}

\paragraph{Synthetic distribution shift} We introduce synthetic distribution shifts to create a scenario for transfer learning. We modify the original feature-label distribution \( \Pd = (\text{Id}, p)_*U[-1,1] \). Specifically, for various values of \( \gamma \in [0,1.3] \), we define the feature-label distributions \( \Pd_\gamma = (\text{Id}, p_\gamma)_*U[-1,1] \) where \( p_{\gamma}(x_n) = 10(x_{n}^3 - \gamma x_n) \), using \( \Pd_{0.65} \) as the target distribution.

\paragraph{Training details} We use the implementation from \citet{shah_decision-focused_2022}\footnote{\url{github.com/sanketkshah/LODLs}} to train models by setting \texttt{loss="DFL"}. This implementation uses an entropy regularized Top-K loss function proposed by \citet{xie_differentiable_2020} that reframes the Top-K problem with entropy regularization as an optimal transport problem, enabling end-to-end learning.

\subsection{Warcraft Shortest Path \cite{vlastelica_differentiation_2020}}
\paragraph{PtO task description.} This setting involves finding the minimum-cost path on $d \times d$ RGB grid maps from the Warcraft II tileset dataset, where each pixel represents terrain with an unknown traversal cost. The task is to first predict these costs from an input image and then determine the shortest path from the top-left to the bottom-right corner based on the predicted cost map. This benchmark is particularly notable because it involves image inputs, a modality not widely explored in other shortest-path learning tasks. Following \cite{vlastelica_differentiation_2020}, we use $96 \times 96$ RGB images as input, with the shortest path being computed on a coarser $12 \times 12$ grid representation of the predicted costs.

\begin{itemize}[leftmargin=0.5cm]
    \item[] \textit{Predict:} Given the feature $x_n \in \mathbb{R}^{d \times d \times 3}$, predict the travel cost grid $\hat{y}_n \in \mathbb{R}^{p \times p}$.
    \item[] \textit{Optimize:} Solve a shortest-path problem over the predicted cost grid. Specifically, find the path $\bm{z}$ that minimizes the total traversal cost: $M(\hat{\bm{y}}) = \min_{\bm{z} \in [0,1]^p} \{\bm{z} \cdot \hat{y}\}$ subject to boundary conditions $\bm{z}_{0,0}=\bm{z}_{p,p}=1$ and connectivity constraints ensuring that $\bm{z}$ represents a valid path from the top-left to the bottom-right corner.
\end{itemize}

\paragraph{Synthetic distribution shift.} The original distribution $\mathcal{P}$, which we treat as the target distribution, is defined over $\mathbb{R}^{d \times d} \times \mathbb{R}^{p \times p}$, where $d = 96$ and $p = 12$. Here, $\mathbb{R}^{d \times d}$ represents the feature space depicting maps, while $\mathbb{R}^{p \times p}$ represents the traveling costs on these maps. We induce a target shift for $\Pd_\gamma$ by uniformly sampling the costs for different pixel classes from the same range as  $\mathcal{P}$ ([0.8, 9.2] for the Warcraft II tileset dataset). Figure~\ref{fig:warcraft_shift} illustrates the costs coming from two different distributions over one same feature while highlighting the different decisions (shortest path) that these costs yield.

\paragraph{Training details.} We use \texttt{pyepo}\footnote{\url{github.com/khalil-research/PyEPO}} implementation with SPO+ loss function on a truncated ResNet-18 consisting of the first five layers, followed by a final convolutional layer that reduces the number of output channels to one. Finally, we use an adaptive max-pooling layer to obtain a fixed $p \times p$ spatial resolution, allowing for a structured representation of the extracted features.

\begin{figure}[h!]
    \centering
    \includegraphics[width=3.25in]{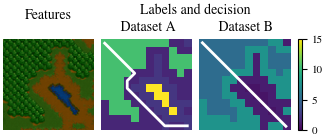}
    \caption{\textit{Synthetic distribution shift in Warcraft Shortest Path}. The white line illustrates the decision, corresponding to the shortest path, on dataset A (center) and dataset B (right) for a sample with the same features (left map).}
    \label{fig:warcraft_shift}
\end{figure}

\subsection{Inventory Stock Problem \cite{donti_task-based_2017}}
\paragraph{PtO task description.} In this problem a company must determine the optimal order quantity \(z\) of a product to minimize costs given a stochastic demand \(y\), which is influenced by observed features \(x\). The cost structure includes both linear and quadratic costs for the amount of product ordered, as well as different linear and quadratic costs for over-orders \([z - y]^+\) and under-orders \([y - z]^+\). The objective function is:
\begin{align}
f_{\text{stock}}(y, z) =& c_{0} z + \frac{1}{2} q_{0} z^{2} + c_{b} [y - z]_{+} + \frac{1}{2} q_{b} ([y - z]_{+})^{2} \notag \\&+ c_{h} [z - y]_{+} + \frac{1}{2} q_{h} ([z - y]_{+})^{2} 
\end{align}

\noindent where \([v]_{+} \equiv \max \{v, 0\}\). In our paper, we use $c_0=30, q_0=10, c_b=10, q_b=2, c_h=30, q_h=25$.
For a given probability model \(p(y| x; \theta)\), the proxy stochastic programming problem can be formulated as: $\underset{z}{\operatorname{minimize}} \quad \mathbf{E}_{y \sim p(y|x; \theta)} \left[ f_{\text{stock}}(y, z) \right]$.

To simplify the setting, we assume that the demands are discrete, taking on values \(d_{1}, \ldots, d_{k}\) with probabilities (conditional on \(x\)) \(\left(p_{\theta}\right)_{i} \equiv p\left(y = d_{i}|x; \theta\right)\). Thus, our stochastic programming problem can be succinctly expressed as a joint quadratic program:
\begin{align*}
&\underset{z \in \mathbb{R}, z_{b}, z_{h} \in \mathbb{R}^{k}}{\operatorname{minimize}} \Big\{
c_{0} z + \frac{1}{2} q_{0} z^{2} + \sum_{i = 1}^{k} \left(p_{\theta}\right)_{i} \big( c_{b} (z_{b})_{i} \tag{10}\\
& \hspace{2.2cm}+ \frac{1}{2} q_{b} (z_{b})_{i}^{2} + c_{h} (z_{h})_{i} + \frac{1}{2} q_{h} (z_{h})_{i}^{2} \big)\Big\} \\
&\text{subject to} \quad d - z \mathbf{1} \leq z_{b}, \quad z \mathbf{1} - d \leq z_{h}, \quad z, z_{h}, z_{b} \geq 0
\end{align*}

\paragraph{Synthetic distribution shift} We generate problem instances by randomly sampling \(x \in \mathbb{R}^n\) and then generating \(p(y| x; \theta)\) according to \(p(y|x; \theta) \propto \exp((\theta^T x)^2)\). We introduce distribution shifts for both \(x\) and \(y\). Specifically, \(x\) is sampled from a Gaussian distribution where the mean is sampled from \(U[-0.5, 0.5]\), and \(\theta\) is also sampled from a Gaussian distribution.

\paragraph{Training details} We use the implementation from \cite{donti_task-based_2017}\footnote{\url{github.com/locuslab/e2e-model-learning}} following their Inventory Stock Problem experiments. 

\section{OTD$^3$ Implementation Details}
Our implementation of the OTD$^3$ relies on the \texttt{POT}\footnote{\url{pythonot.github.io/}} package. The computation of dataset distance involves two main steps:
\begin{enumerate}
    \item \textbf{Computing Pairwise Pointwise Distances:}
    We first compute the pairwise distances between samples in the source and target datasets. This involves calculating distances separately for features, labels, and decisions, weighted according to the selected component weights $(\alpha_X,\alpha_Y,\alpha_W)$. Feature and label distances are computed using standard metric spaces (e.g., Euclidean or cosine distance), while decision distances are computed using decision quality disparity.
    \item \textbf{Solving the Optimal Transport Problem:}
    Given the computed pairwise distances, we compute the dataset distance using Earth Mover’s Distance (EMD) via \texttt{POT}'s \texttt{emd} solver. EMD finds the exact optimal transport plan, making it well-suited for capturing true correspondences between source and target datasets without introducing regularization bias. This approach was computationally feasible in our experiments due to the relatively small dataset sizes.
\end{enumerate}
Additionally, for experiments involving hyperparameter tuning, we evaluate multiple weight combinations on a predefined grid and select the setting that maximizes correlation with regret transferability.

\section{Additional Results}
\subsection{Selecting source datasets for transfer learning}
In Section~\ref{sec:pred_transfer} we analyzed the correlation between dataset distance and transferability in PtO. The plots presented in Figure \ref{app-fig:regression} show this correlation for the Linear Model TopK setting and the Inventory Stock problem under two weighting profiles: one where decision-related features are excluded (left) and one where they are included (right). In both settings, incorporating decisions into the distance metric leads to improved predictability of transfer performance. This effect is more pronounced in the Linear Model TopK task than in the Inventory Stock problem.

For these settings, we do not perform fine-tuning on the target dataset. Instead, we assess transferability in a zero-shot setting, where a model trained on the source dataset is directly applied to the target domain without further adaptation. This choice is motivated by the relative simplicity of the feature spaces involved, which enables a meaningful evaluation of dataset distances without introducing potential confounding effects from additional training steps. Accordingly, rather than plotting dataset distance against the relative drop in regret after fine-tuning, we plot it against $\mathcal{T}(S \to T) = (\text{reg}(\D_{S})-\text{reg}(\D_{T}))/\text{reg}(\D_{T})$,
where $\text{reg}(\D_{S})$ denotes the decision regret when applying the source-trained model to the target dataset, and $\text{reg}(\D_{S})$ is the regret of a model trained directly on the target.

\begin{figure}[h!]
    \centering
    \includegraphics[width=0.45\columnwidth]{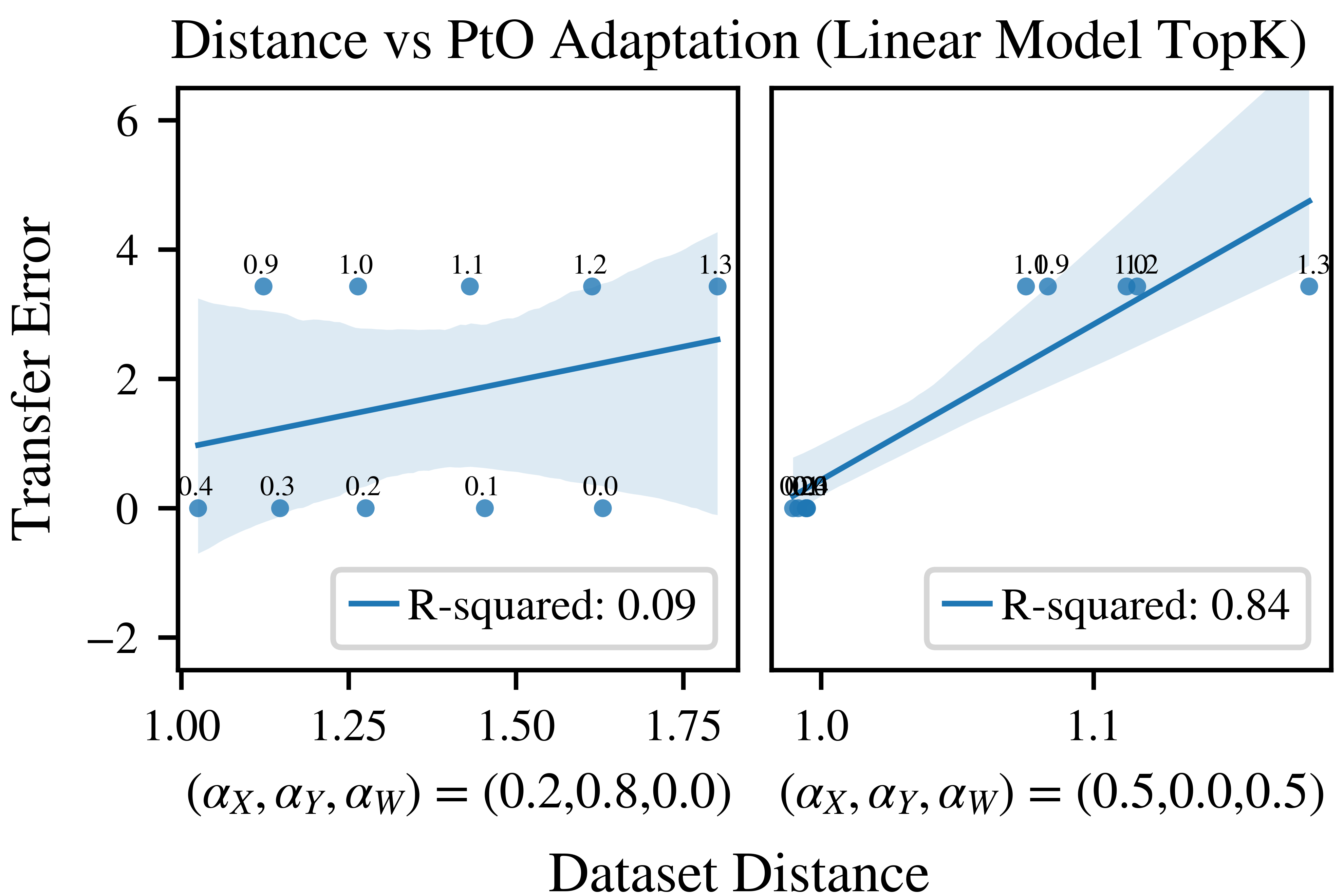}
    \includegraphics[width=0.45\columnwidth]{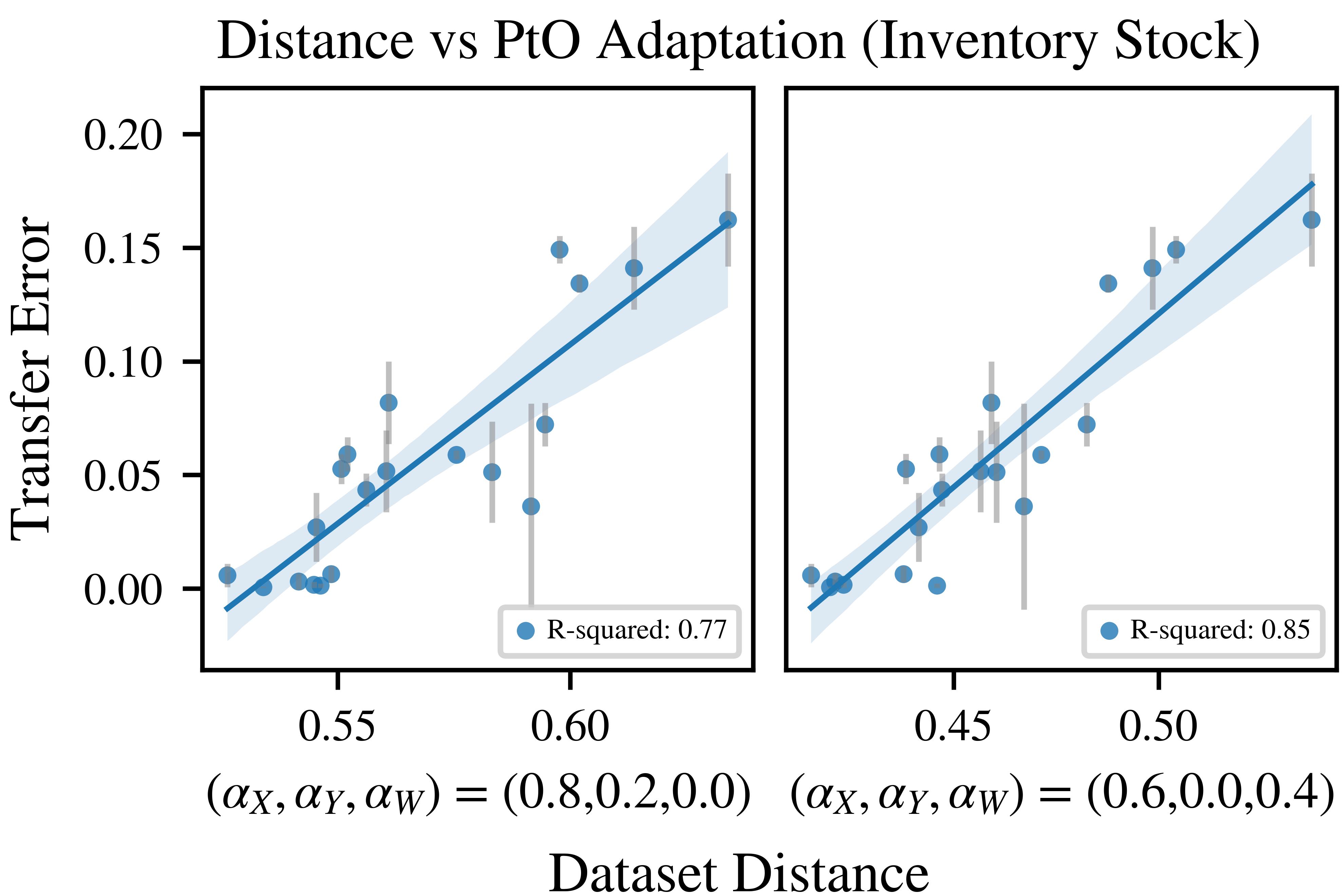}
    \caption{\textit{Distance vs Adaptation}. OT distance for the best feature-label and feature-label-decision weighting against regret transferability.}
    \label{app-fig:regression}
\end{figure}

\noindent
\begin{minipage}[t]{0.55\textwidth}
    \vspace{0pt} 
    \centering
    \begin{minipage}[b]{0.48\textwidth}
        \centering
        \includegraphics[height=1.6in]{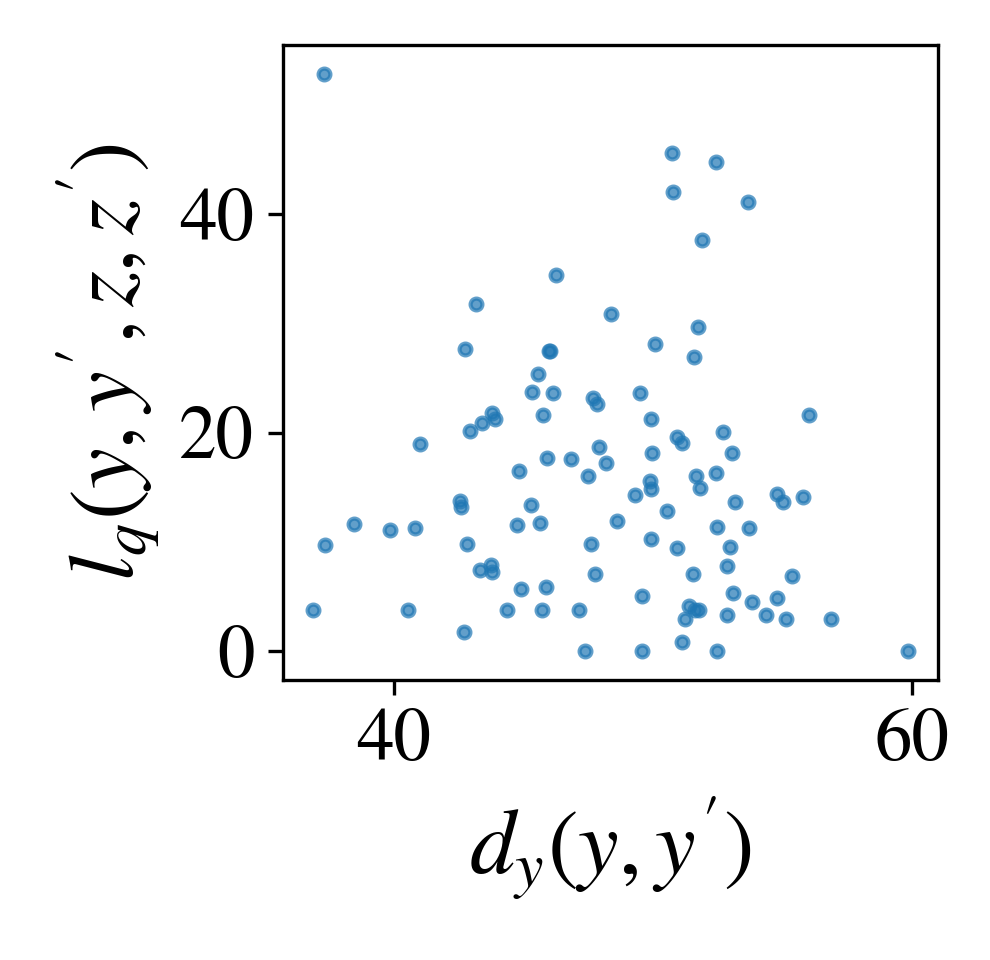}
        \makebox[\linewidth]{\small (a) Warcraft Shortest Path}
        \label{fig:correlation_warcraft}
    \end{minipage}%
    \hfill
    \begin{minipage}[b]{0.48\textwidth}
        \centering
        \includegraphics[height=1.6in]{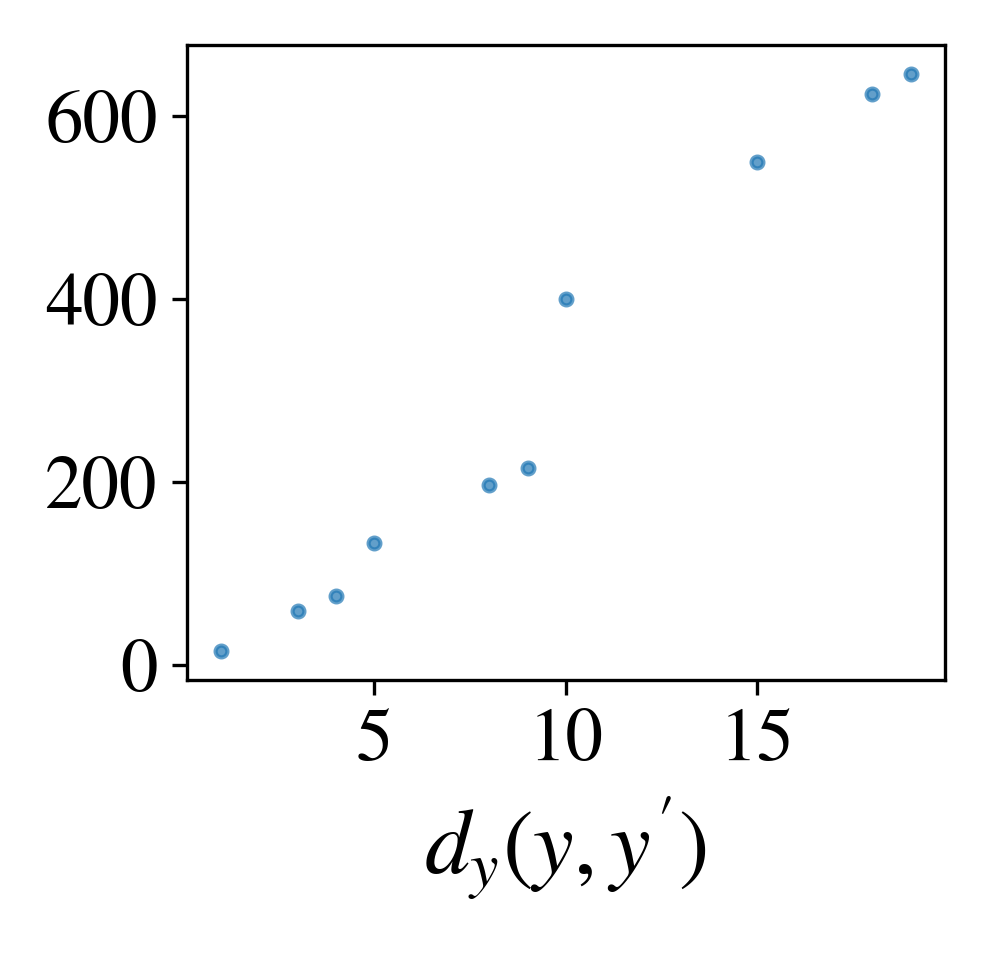}
        \makebox[\linewidth]{\small (b) Inventory Stock}
        \label{fig:correlation_inventory}
    \end{minipage}
    \captionof{figure}{Difference in labels against difference in decisions.}
\label{fig:combined_correlation}
\end{minipage}%
\hfill
\begin{minipage}[t]{0.4\textwidth}
    \vspace{0pt}
    To illustrate the relationship between label space differences $d_y(y,y')$ and decision space differences $l_q(y,y',z,z')$ in different PtO tasks, we provide the following visualizations. Figure~\ref{fig:correlation_inventory} shows this correlation for the Inventory Stock problem, while Figure~\ref{fig:correlation_warcraft} presents the same analysis for the Warcraft domain.
\end{minipage}

\end{document}